% -*- mode: LaTeX; TeX-PDF-mode: t; -*-
\documentclass[twoside,11pt]{article}

% Any additional packages needed should be included after jmlr2e.
% Note that jmlr2e.sty includes epsfig, amssymb, natbib and graphicx,
% and defines many common macros, such as 'proof' and 'example'.
%
% It also sets the bibliographystyle to plainnat; for more information on
% natbib citation styles, see the natbib documentation, a copy of which
% is archived at http://www.jmlr.org/format/natbib.pdf
\usepackage{ifthen}
\newboolean{JMLR}
% either true or false
\setboolean{JMLR}{false}
\ifthenelse{\boolean{JMLR}}{\usepackage{jmlr2e}}{\usepackage{jmlr2eArxiv}}

\usepackage{amsmath}
\usepackage{algorithm}
\usepackage[noend]{algpseudocode}
\usepackage{pifont}
\usepackage{xspace}
\usepackage[utf8]{inputenc}
\usepackage{subfigure}
\usepackage{multirow}
\usepackage{booktabs}
\usepackage{hyperref}
\hypersetup{colorlinks=true,linkcolor=black,citecolor=black,filecolor=black,urlcolor=black}
\ifthenelse{\boolean{JMLR}}{\graphicspath{{figs/}}}{}

% Heading arguments are {volume}{year}{pages}{submitted}{published}{author-full-names}
\jmlrheading{---}{---}{---}{12/12}{---}{Tammo Krueger, Danny Panknin and Mikio Braun}%
% Short headings should be running head and authors last names
\ShortHeadings{Fast Cross-Validation via Sequential Testing}{Krueger, Panknin and Braun}%
\firstpageno{1}%
\editor{Charles Elkan}%

% Definitions of handy macros can go here

\newcommand{\Bounda}{\log \frac{1-\beta_l}{\alpha_l}}
\newcommand{\Boundb}{\log \frac{\beta_l}{1-\alpha_l}}
\newcommand{\Boundp}{\log \frac{\pi_1}{\pi_0}}
\newcommand{\Boundq}{\log \frac{1-\pi_1}{1-\pi_0}} 
\newcommand{\BoundQ}{\log \frac{1-\pi_0}{1-\pi_1}}

\newcommand{\sX}{\mathcal{X}}
\newcommand{\sY}{\mathcal{Y}}
\newcommand{\steps}{\ensuremath{S}}
\newcommand{\modelsize}{\ensuremath{n}}
\newcommand{\stoppingWindow}{\ensuremath{w_\text{stop}}\xspace}
\newcommand{\BoundPi}{\ensuremath{\sqrt[\steps]{\frac{1-\beta_l}{\alpha_l}}}}
\newcommand{\secZone}{\ensuremath{\frac{\Boundb}{\log 2 - \sqrt[\steps]{\frac{1-\beta_l}{\alpha_l}}}}}
\newcommand{\real}{\mathbb{R}}

\newcommand{\srow}{s_\text{R}}
\newcommand{\scol}{s_\text{C}}
\newcommand{\ssafe}{s_\text{safe}}
\newcommand{\srate}{s_r}
\newcommand{\tfull}{t}

% \JMLRfigure{file-name}{scaling}{caption}
\newcommand{\JMLRfigureMore}[4]{\begin{figure}[tb]%
\begin{center}%
\includegraphics[width=#2\textwidth,#4]{#1}
\end{center}
\caption{#3}
\label{fig:#1}
\end{figure}
}

\newcommand{\JMLRfigure}[3]{\JMLRfigureMore{#1}{#2}{#3}{}}

\newcommand{\JMLRfigureP}[3]{\begin{figure}[tbp]%
\begin{center}%
\includegraphics[width=#2\textwidth]{#1}
\end{center}
\caption{#3}
\label{fig:#1}
\end{figure}
}

\newcommand{\JMLRdoublefigure}[4]{\begin{figure}[tbp]%
\begin{center}%
\includegraphics[width=#3\textwidth]{#1}
\includegraphics[width=#3\textwidth]{#2}
\end{center}
\caption{#4}
\label{fig:#1}
\end{figure}
}

\newcommand{\JMLRtripplefigure}[5]{\begin{figure}[p]%
\begin{center}%
\includegraphics[width=#4\textwidth]{#1}
\includegraphics[width=#4\textwidth]{#2}
\includegraphics[width=#4\textwidth]{#3}
\end{center}
\caption{#5}
\label{fig:#1}
\end{figure}
}

\newcommand{\hypocl}{\mathcal{G}}
\newcommand{\rkhs}{\mathcal{H}}

\begin{document}

\title{Fast Cross-Validation via Sequential Testing}

\author{\name Tammo Krueger \email tammokrueger@gmail.com \\
       \name Danny Panknin \email panknin@cs.tu-berlin.de\\
       \name Mikio Braun \email mikio.braun@tu-berlin.de\\
       \addr Technische Universit\"at Berlin\\
       Machine Learning Group\\
       Marchstr. 23, MAR 4-1\\
       10587 Berlin, Germany\\
}

\maketitle

\begin{abstract}%   <- trailing '%' for backward compatibility of .sty file
  With the increasing size of today's data sets, finding the right
  parameter configuration in model selection via cross-validation can
  be an extremely time-consuming task. In this paper we propose an
  improved cross-validation procedure which uses nonparametric
  testing coupled with sequential analysis to determine the best
  parameter set on linearly increasing subsets of the data. By
  eliminating underperforming candidates quickly and keeping promising
  candidates as long as possible, the method speeds up the computation
  while preserving the power of the full cross-validation. Theoretical
  considerations underline the statistical power of our procedure. The
  experimental evaluation shows that our method reduces the
  computation time by a factor of up to 120 compared to a full
  cross-validation with a negligible impact on the accuracy.
\end{abstract}

\begin{keywords}
  cross-validation, statistical testing, nonparametric methods
\end{keywords}

\section{Introduction}

Model selection by cross-validation is a de-facto standard in applied
machine learning to tune parameter configurations of machine learning
methods in supervised learning settings (see \citealt{Mosteller68,
  stone74, geisser75} and also \citealt{Arlot2010} for a recent and
extensive review of the method). Part of the data is held back and
used as a test set to get a less biased estimate of the true
generalization error. Cross-validation is computationally quite
demanding, though. Doing a full grid search on all possible
combinations of parameter candidates quickly takes a lot of time, even
if one exploits the obvious potential for parallelization.

Therefore, cross-validation is seldom executed in full in practice,
but different heuristics are usually employed to speed up the
computation. For example, instead of using the full grid, local search
heuristics may be used to find local minima in the test error (see for
instance \citealt{Kohavi95,Bengio00,Keerthi06}). However,
in general, as with all local search methods, no guarantees can be
given as to the quality of the found local minima. Another frequently
used heuristic is to perform the cross-validation on a subset of the
data, and then train on the full data set to get the most accurate
predictions. The problem here is to find the right size of the
subset: If the subset is too small and cannot reflect the
true complexity of the learning problem, the configurations selected by
cross-validation will lead to underfitted models. On the other hand, a
too large subset will take longer for the cross-validation to finish.

Effective use of model selection heuristics requires both an
experienced practitioner and familiarity with the data set. However,
as we will discuss in more depth below, the effect of taking subsets
on the estimated generalization error is more manageable: Given
increasing subsets of the data, the test errors converge to the values
on the full data set for each parameter configuration, but the
parameter configuration achieving the minimum test error will converge
much faster. Thus, using subsets in a systematic way opens up a
promising way to speed up the model selection process, since training
models on smaller subsets of the data is much more
time-efficient. During this process care has to be taken when an
increase in available data suddenly reveals more structure in the
data, leading to a change of the optimal parameter
configuration. Still, as we will discuss in more depth, there are ways
to guard against such change points, making the heuristic of taking
subsets a more promising candidate for an automated procedure.

In this paper we will propose a method which speeds up
cross-validation by considering subsets of increasing size. By
removing clearly underperforming parameter configurations on the way
this leads to a substantial saving in total computation time as
sketched in Figure~\ref{fig:ConceptualTimeConsumption}. In order to
account for possible change points, sequential testing \citep{wald47}
is adapted to control a \emph{safety zone}, roughly speaking, a
certain number of allowed failures for a parameter configuration; at
the same time this framework allows for dropping clearly
underperforming configurations. Finally, we add a stopping criterion
to watch for early convergence of the process to further speed up the
computation.  The resulting method thus consumes less time and space
than a full grid cross-validation procedure at no significant loss in
accuracy.  We prove certain theoretical properties about its
optimality, yet, this procedure relies on the availability of a vast
amount of data to guide the decision process into a stable region
where each configuration sees enough data to show its real
performance.

In the following, we will first discuss the effects of taking subsets
on learners and cross-validation (Section~\ref{sec:cvsubsets}),
discuss related work in Section~\ref{sec:relatedWork}, present our
method Fast Cross-Validation via Sequential Testing (CVST,
Section~\ref{sec:fastcv}), state the theoretical properties of the
method (Section~\ref{sec:theory}) and finally evaluate our method on
synthetic and real-world data sets in
Section~\ref{sec:experiments}. Section~\ref{sec:related} gives an
overview of possible extensions and Section~\ref{sec:conclusion}
concludes the paper. The impatient practitioner may skip some
theoretical treatments and focus on the self-contained
Section~\ref{sec:fastcv} describing the CVST algorithm and its
evaluation in Section~\ref{sec:experiments}. To ease the reading
process we collected our notational conventions in
Table~\ref{tab:symbols}.

\begin{figure}[t]
  \begin{tabular}[h]{c|c}
    \bf 5-fold CV & \bf CVST\\
\includegraphics[width=.45\textwidth]{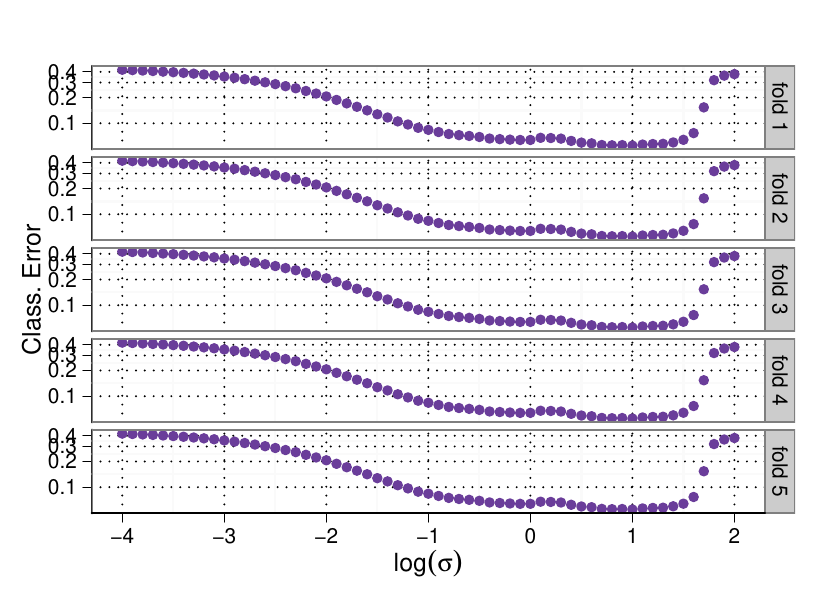} &
\includegraphics[width=.45\textwidth]{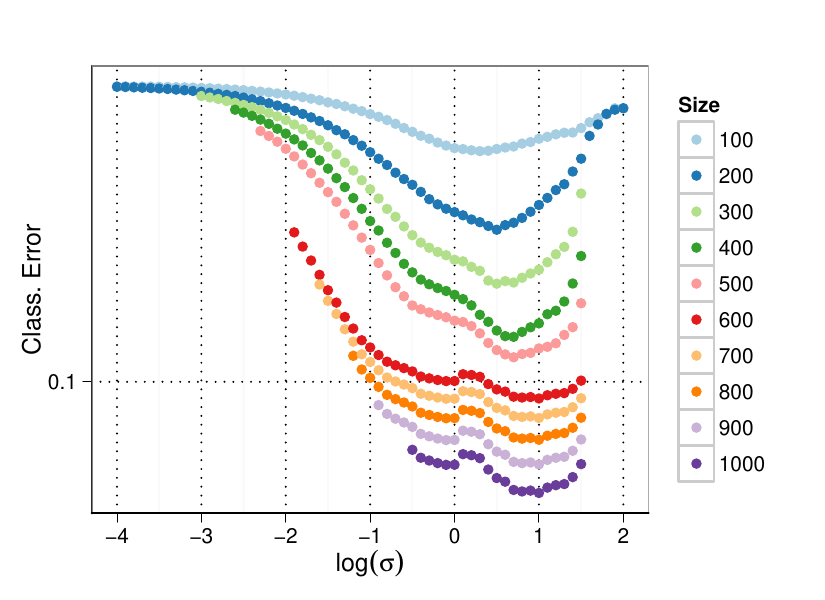}\\
\includegraphics[width=.45\textwidth]{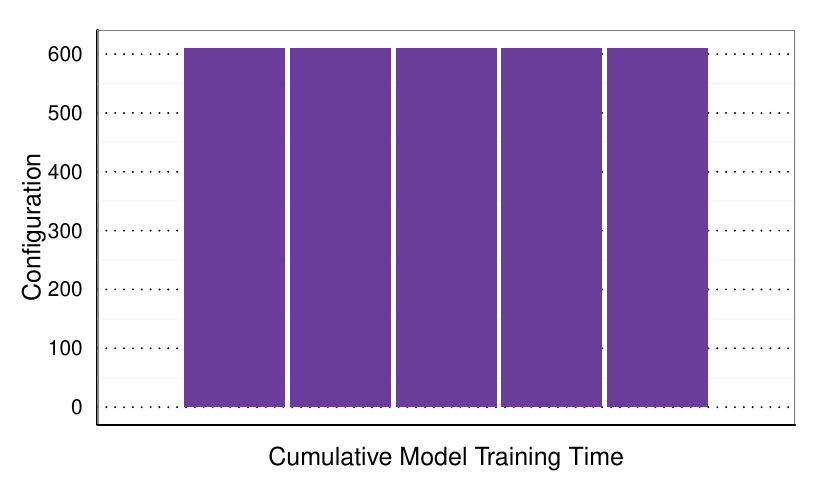} &
\includegraphics[width=.45\textwidth]{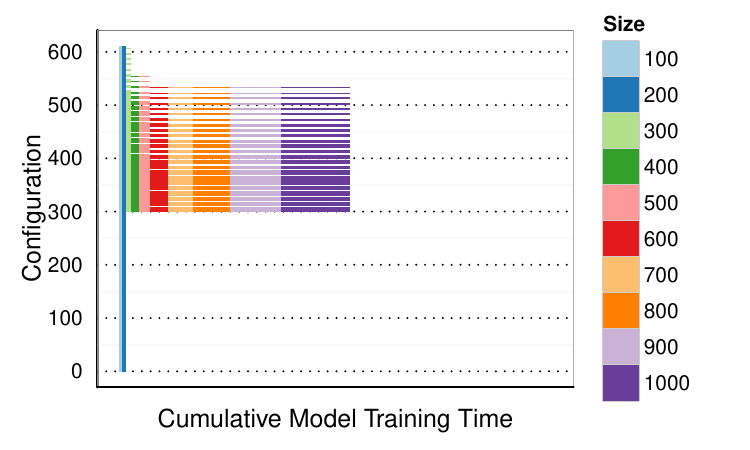}\\
  \end{tabular}
\begin{center}
\end{center}
\caption{Performance of a 5-fold cross-validation (CV, left) and fast
  cross-validation via sequential testing (CVST, right): While the CV
  has to calculate the model for each configuration (here: $\sigma$ of
  a Gaussian kernel) on the full data set, the CVST algorithm uses
  increasing subsets of the data and drops significantly
  underperforming configurations in each step (upper panels), resulting in a drastic
  decrease of total calculation time (sum of colored area in lower panels).}
\label{fig:ConceptualTimeConsumption}
\end{figure}

\begin{table}[htb]
\begin{center}
\begin{tabular}{lp{.65\textwidth}}
  \toprule
  \bf Symbol & \bf Description\\
  \midrule
  $d_i = (X_i, Y_i) \in \sX\times\sY$ & Data points\\
  $N$ & Total data set size\\
  $g : \sX\mapsto\sY$ & Learned predictor\\
  $\ell: \sY \times \sY \mapsto \real$ & Loss function\\
  $R(g)= E[\ell(g(X), Y)]$ & Risk of predictor $g$\\
  %$\hat R_n(g)$ & Empirical risk of predictor $g$ on the training set
  %of size $n$\\
  $c$ & Configuration of learner\\
  $C$ & Finite set of examined configurations\\
  $g_n(c) : \sX\mapsto\sY $ & Predictor learned on $n$ data points for configuration
  $c$\\
%  $e_n(c)= R(g_n(c))$ & Risk of learner on $n$ data points\\
%  $e(c) = \lim_{n\rightarrow\infty} e_n(c)$ & Expected risk of learner\\
  $c^*$ & Overall best configuration\\
  $c^*_n$ & Best configuration for models based on $n$ data points\\
  $s$ & Current step of CVST procedure\\
  $\steps$ & Total number of  steps\\
  $\Delta=N/\steps$ & Increment of model size\\
  $P_p$ & Pointwise performance matrix\\
  $P_S$ & Overall performance matrix of dimension $|C| \times S$\\
  $T_S$ & Trace matrix of dimension $|C| \times S$\\
  \stoppingWindow & Size of early stopping window\\
  $\alpha, \alpha_l, \beta_l$ & Significance levels\\
  $\pi$ & Success probability of a binomial variable\\
  \bottomrule
\end{tabular}
\end{center}
  \caption{List of symbols}
  \label{tab:symbols}
\end{table}

\section{Cross-Validation on Subsets}
\label{sec:cvsubsets}

Our approach is based on taking subsets of the data to speed up
cross-validation. For this approach to work well, we need
that the minima of the test errors give reliable estimates of the true
performance already for small subsets. In this section, we will
discuss the setting and motivate our approach. The goal is to
understand which effects lead to making the estimates reliable.

Let us first introduce some notation: Assume that our $N$ training
data points $d_i$ are given by input/output pairs $d_i = (X_i,
Y_i)\in\sX\times\sY$ drawn i.i.d.~from some probability distribution
$P$ on $\sX\times\sY$. We assume an example-wise loss
function $\ell\colon \sY\times\sY\to\real$ so that the overall
error or expected risk of a predictor $g\colon \sX\to\sY$ is given by
$R(g) = E[\ell(g(X), Y)]$ where $(X, Y)\sim P$. For some finite set of
possible parameter configurations $C$, let $g_n(c)$ be the predictor
learned for parameter $c\in C$ from the first $n$ training examples.

The core procedure in a cross-validation approach is to train
predictors for each $c$ and consider their test error. Denote by
$g_n(c)$ the predictor obtained by training on the first $n$ points of
the training data for parameter $c$. We wish to study whether this
error converges as $n$ grows. Let us denote by $c^*_n$ a configuration
optimal for subset size $n$:
\begin{equation*}
  R(g_n(c^*_n)) = \min_{c\in C} R(g_n(c)).
\end{equation*}
We will ignore cases where there are multiple minima $c^*_n$, because
we are only interested in the test error achieved, not the location of
the minimum.

In cross-validation, the true test error is not known and estimated by
the empirical error on an independent test set. For the sake of
simplicity, we will consider the true test error nevertheless for the
remainder of this section. In our experience, the effects discussed
below also hold for cross-validation, because the estimation error is
small and does not create a systematic distortion of the choice of
configuration.

Since we want to infer the performance of the predictor on the full
training set based on its performance on a subset, we need that the
errors are similar for a fixed configuration $c$ as the size of the
subset approaches the full training set size. A necessary condition
for this to hold in general is that $R(g_n(c))$ converges as $n$ tends
to infinity. Luckily, this holds for most existing learning methods
(see Appendix~\ref{app:cond} for some examples).  A counter example is
the case of $k$-nearest neighbor with fixed $k$. Training with $k=10$
leads to quite different predictions on data sets of size $100$
compared to, say, $10{,}000$. More discussion can be found below in
Section~\ref{sec:discussionanalyses}.

We are interested in the difference in errors between the best
parameter configuration learned on the subset of size $n$, and on the
full data set $N$, that is, $R(g_n(c^*_n)) - R(g_N(c^*_N))$. This
error can be bounded by considering the difference between $R(g_n(c))$
and $R(g_N(c))$ uniformly over the whole configuration set $C$. If the
learner itself converges it is trivial to show that the errors also
converge for finite parameter configuration sets $C$. On the other
hand, uniform convergence is quite a strong requirement, since it
requires that the test errors also converge for suboptimal
configurations. In particular for parameter configurations $c$ which
correspond to complex models, $g_n(c)$ may continue to improve right
up to the full number $N$ of data points.

\begin{figure}
  \centering
  \subfigure[\label{fig:ErrorExample}]{\includegraphics[width=.49\textwidth]{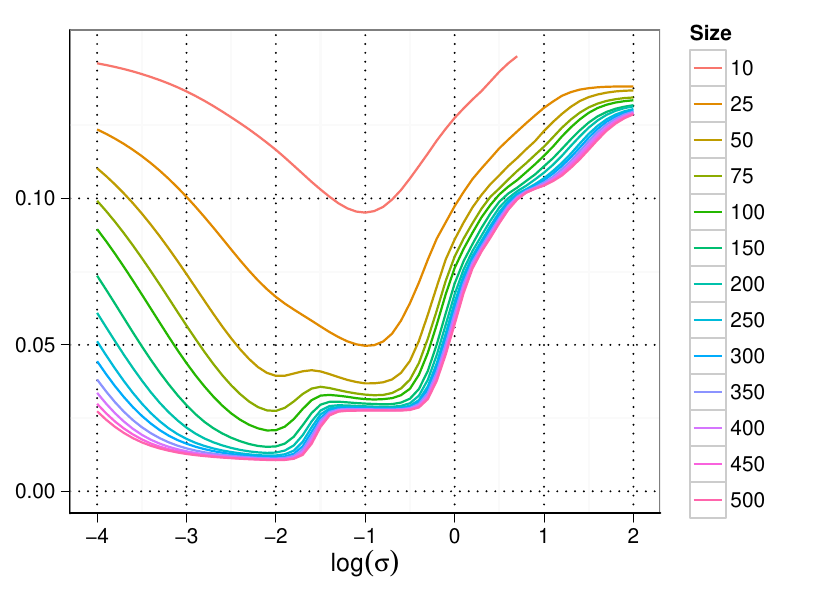}}
  \subfigure[\label{fig:ErrorAnnotated}]{\includegraphics[width=.49\textwidth]{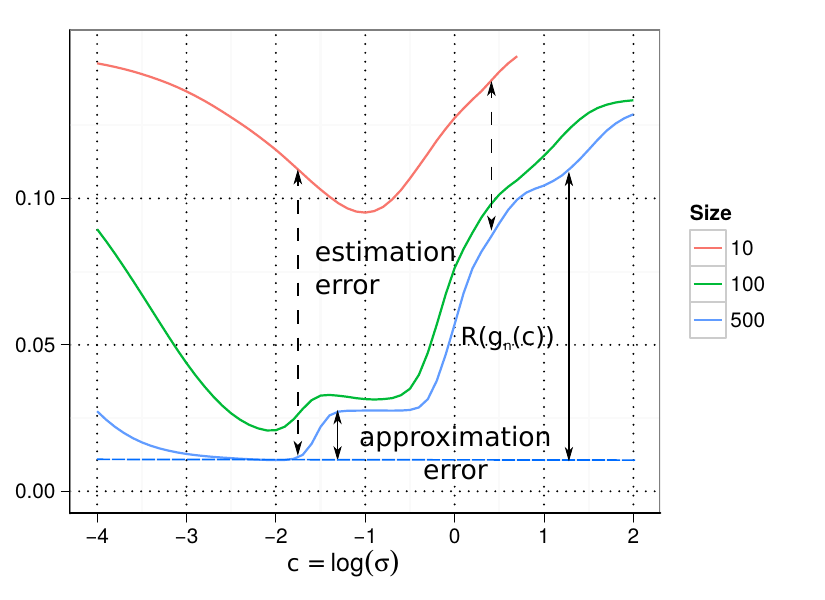}}
  \caption{Test error of an SVR model on the \emph{noisy sinc} data
    set introduced in Section~\ref{sec:art}. We can observe a shift of
    the optimal $\sigma$ of the Gaussian kernel to the fine-grained
    structure of the problem, if we have seen enough data. In Figure
    (b), approximation error is indicated by the black solid line, and
    the estimation error by the black dashed line. The minimal risk is
    shown as the blue dashed line. One can see that uniform
    approximation of the estimation error is not the main driving
    force, instead, the decay of the approximation error with smaller
    kernel widths together with an increase of the estimation error at
    small kernel widths makes sure that the minimum converges
    quickly.}
\end{figure}

So while uniform convergence seems a sufficient condition, let us look
at a concrete example to see whether uniform convergence is a
necessary condition for convergence of the
minima. Figure~\ref{fig:ErrorExample} shows the test errors for a
typical example. We train a support vector regression model (SVR) on
subsets of the full training set consisting of 500 data points. The
data set is the \emph{noisy sinc} data set introduced in
Section~\ref{sec:art}. Model parameters are the kernel width $\sigma$
of the Gaussian kernel used and the regularization parameter, where
the results shown are already optimized over the regularization
parameter for the sake of simplicity.

We see that the minimum converges rather quickly, first to the plateau
of $\log(\sigma)\in [-1.5, -0.3]$ approximately, and then towards the
lower one at $[-2.5, -1.7]$, which is also the optimal one at training
set size $n = 500$. We see that uniform convergence is not the main
driving force. In fact, the errors for small kernel widths are still
very far apart even when the minimum is already converged.

In the following, it is helpful to continue the discussion using the
concepts of estimation error and approximation error. We assume that
the learner is trained by picking the model which minimizes the
empirical risk over some hypothesis set $\hypocl$. Let us denote this
predictor as $g_n^*$. In this setting, one can write the difference
between the expected risk of the predictor $R(g_n^*)$ and the Bayes
risk $R^*$ as follows (see Section~12.1 in~\citealt{devroy96} or
Section~2.4.3 in~\citealt{mohri12}):
\begin{equation*}
  R(g_n^*) - R^* 
  = \underbrace{\left(R(g_n^*) - \inf_{g\in \hypocl}R(g)\right)}
     _{\text{estimation error}}
  + \underbrace{\left(\inf_{g\in \hypocl}R(g) - R^*)\right)}
     _{\text{approximation error}}.
\end{equation*}
The estimation error measures how far the chosen model is from the one
which would be asymptotically optimal, while the approximation error
measures the difference in risk between the best possible model in the
hypothesis class and the true function.

Using this decomposition, we can interpret the figure as follows (see
Figure~\ref{fig:ErrorAnnotated}): The kernel width controls the
approximation error. For $\log(\sigma) \ge -1.8$, the resulting
hypothesis class is too coarse to represent the function under
consideration. It becomes smaller until it reaches the level of the
Bayes risk as indicated by the dashed blue line. For even larger
training set sizes, we can assume that it will stay on this level even
for smaller kernel sizes.

The difference between the blue line and the upper lines shows the
estimation error. The estimation error has been extensively studied in
statistical learning theory and is known to be linked to different
notions of complexity like VC-dimension \citep{vapnik98},
fat-shattering dimension \citep{bartlett96}, or the norm in the
reproducing kernel Hilbert space (RKHS) \citep{evgeniou99}.  A typical
result shows that the estimation error can be bounded by terms of the
form
\begin{equation*}
  R(g^*_n) - \inf_{g\in\hypocl} R(g) \le O\left(\sqrt{\frac{d(\hypocl) \log n}{n}}\right),
\end{equation*}
where $d(\hypocl)$ is some notion of complexity of the underlying
hypothesis class, and the bound holds with high probability. For our
figure, this means that we can expect the estimation error to become
larger for smaller kernel widths.

If we image the parameter configurations ordered according to their
complexity, we see that for parameter configurations with small
complexity (that is, large kernel width), the approximation error will
be high, but the estimation error will be small. At the same time, for
parameter configurations with high complexity, the approximation error
will be small, even optimal, but the estimation error will be large,
although it will decay with increasing training set size. In
combination, the estimates at smaller training set sizes tend to
underestimate the true model complexity, but as the estimation
error decreases and becomes small compared to the approximation error,
the minimum also converges to the true one. The fact that the
estimation error is larger for more complex models acts as a guard to
choose too complex models. The estimation error for models which have
higher complexity than the optimal one can effectively be
ignored. Therefore, we can expect much faster convergence than given
by a uniform error bound, which is, however, highly data dependent.

Unfortunately, existing theoretical results are not able to bound the
error sufficiently tightly to make these arguments more exact. In
particular, the speed of the convergence on the minimum hinges on a
tight lower bound on the approximation error, and a realistic upper
bound on the estimation error. Approximation errors have been studied
for example in the papers by \cite{smale03} and \cite{steinwart07},
but the papers only prove upper bounds, and the rates are also
worst-case rates which are likely not close enough to the true
errors. A more formal study of the effects discussed above is
therefore the subject of future work.

On the other hand, the mechanisms which lead to fast convergence of
the minimum are plausible when looking at concrete examples as we did
above. Therefore, we will assume in the following that the location of
the best parameter configuration might initially change but then
become more or less stable quickly. Note that we do not claim that the
speed of this convergence is known. Instead, we will use sequential
testing to introduce a \emph{safety zone} which will be as large as
possible to ensure that our method is robust against these initial
changes and good configurations survive till final stable regime.

\section{Related Work}
\label{sec:relatedWork}
Using statistical tests and the sequential analysis framework in order
to speed up learning has been the topic of several lines of
research. However, the existing body of work mostly focuses on
reducing the number of test evaluations, while we focus on the overall
process of eliminating candidates themselves. To the best of our
knowledge, this is a new concept and can apparently be combined with
the already available racing techniques to further reduce the total
calculation time.

\cite{Maron94,Maron1997} introduce the so-called \emph{Hoeffding
  Races} which are based on the nonparametric Hoeffding bound for the
mean of the test error. At each step of the algorithm a new test point
is evaluated by all remaining models and the confidence intervals of
the test errors are updated accordingly. Models whose confidence
interval of the test error lies outside of at least one interval of a
better performing model are dropped. In a similar vein \cite{Zheng13}
have applied this concept to cross-validation and improve this
approach by using paired t-test and power analysis to control both the
false positive and false negative rate. \cite{Chien1995,Chien1999}
devise a similar range of algorithms using concepts of PAC learning
and game theory: Different hypotheses are ordered by their expected
utility according to the test data the algorithm has seen so far. As
for Hoeffding Races, the emphasis in this approach lies on reducing
the number of evaluations. Thus, the application domain for these kind
of algorithms is best suited where the evaluation of a data point
given a learned model is costly. Since this approach expects that a
model is fully trained before its evaluation, the direct utilization
of racing algorithms for model selection would result in a procedure
similar to a one-fold cross-validation: First learn a model on one
half of the data and do the time efficient evaluation as described
above on the other half. Obviously, this would yield a maximal
relative time improvement of $k$ compared to standard $k$-fold
cross-validation since we learn one model instead of the $k$ for
$k$-fold cross-validation. Yet, the orthogonality of this approach to
the CVST procedure could be utilized in each step and for each
remaining configuration to further increase the runtime benefits by
minimizing the necessary evaluations of a model for determining
whether it belongs to the top configurations or not.

This concept of racing is further extended by \cite{Domingos2001}: By
introducing an upper bound for the learner's loss as a function of the
examples, the procedure allows for an early stopping of the learning
process, if the loss is nearly as optimal as for infinite
data. \cite{Birattari2002} apply racing in the domain of evolutionary
algorithms and extend the framework by using the Friedman test to
filter out non-promising configurations. While \cite{Bradley08} use
similar concepts in the context of boosting (FilterBoost),
\cite{Mnih2008} introduce the empirical Bernstein Bounds to extend
both the FilterBoost framework and the racing algorithms. In both
cases the bounds are used to estimate the error within a specific
$\epsilon$-region with a given probability. \cite{pelossof09} use the
concept of sequential testing to speed up the boosting process by
controlling the number of features which are evaluated for each
sample. In a similar fashion this approach is used in
\cite{pelossof10} to increase the speed of the evaluation of the
perceptron and in \cite{pelossof11} to speed up the Pegasos
algorithm. \cite{stanski12} uses a partial leave-one-out evaluation of
model performance to get an estimate of the overall model performance,
which is used to pick the most probable best model. These racing
concepts are applied in a wide variety of domains like reinforcement
learning \citep{Heidrich-Meisner2009} and timetabling
\citep{birattari09} showing the relevance and practical impact of the topic.

Recently, Bayesian optimization has been applied to the problem of
hyper-parameter optimization of machine learning
algorithms. \cite{Bergstra11} use the sequential model-based global
optimization framework (SMBO) and implement the loss function of an
algorithm via hierarchical Gaussian processes. Given the previously
observed history of performances, a candidate configuration is selected
which minimizes this historical surrogate loss function. Applied to
the problem of training deep belief networks this approach shows
superior performance over random search strategies. \cite{snoek12}
extend this approach by including timing information for each
potential model, i.e., the cost of learning a model and optimizing the
expected improvement per seconds leads to a global optimization in
terms of wall-clock time. \cite{Thornton12} apply the SMBO framework
in the context of the WEKA machine learning toolbox: The so-called
Auto-WEKA procedure not only finds the optimal parameter for a
specific learning problem but also searches for the most suitable
learning algorithm. Like the racing concepts, these Bayesian
optimization approaches are orthogonal to the CVST approach and could
be combined to speed up each step of the CVST loop.

On first sight, the multi-armed bandit problem
\citep{berry85,cesa-bianchi06} also seems to be related to the problem
here in another way: In the multi-armed bandit problem, a number of
distributions are given and the task is to identify the distribution
with the largest mean from a chosen sequence of samples from the
individual distributions. In each round, the agent chooses one
distribution to sample from and typically has to find some balance
between exploring the different distributions, rejecting distributions
which do not seem promising and focusing on a few candidates to get
more accurate samples.

This looks similar to our setting where we also wish to identify
promising candidates and reject underperforming configurations early
on in the process, but the main difference is that the multi-armed
bandit setting assumes that the distributions are fixed whereas we
specifically have to deal with distributions which change as the
sample size increases. This leads to the introduction of a safety
zone, among other things. Therefore, the multi-armed bandit setting is
not applicable across different sample sizes.  On the other hand, the
multi-armed bandit approach is a possible extension to speed up the
computation within a fixed training size similar to the Hoeffding
races already mentioned above.

\section{Fast Cross-Validation via Sequential Testing (CVST)}
\label{sec:fastcv}

Recall from Section~\ref{sec:cvsubsets} that we have a data set
consisting of $N$ data points $d_i = (X_i, Y_i) 
\in \sX\times\sY$ which we assume to be drawn i.i.d.~from
$P$. We have a learning algorithm which
depends on several parameters collected in a configuration $c \in
C$. The goal is to select the configuration $c^*$ out of all possible
configurations $C$ such that the learned predictor $g$ has the best
generalization error with respect to some loss function $\ell:
\mathcal{Y}\times\mathcal{Y}\to \real$.

Our approach attempts to speed up the model selection process by
learning just on subsamples of size $n := s \frac{N}{\steps} =
s\Delta$ for $1\le s\le \steps$ where $\steps$ is the maximal number
of steps the CVST algorithm should run. The procedure starts with the
full set of configurations and eliminates clearly underperforming
configurations at each step $s$ based on the performances observed in
steps 1 to $s$. The main loop of Algorithm~\ref{alg:complete} on page
\pageref{alg:complete} executes the following parts at each step $s$:

\begin{dingautolist}{202}
\item The procedure learns a model on the first $n$ data points for
  the remaining configurations and stores the test errors on the
  remaining $N-n$ data points in the pointwise performance matrix
  $P_p$ (Lines~\ref{alg:perf1}-\ref{alg:perf2}). This matrix $P_p$ is
  used on Lines~\ref{alg:best1}-\ref{alg:best2} to estimate the top
  performing configurations via robust testing (see
  Algorithm~\ref{alg:topConfigurations}) and saves the outcome as a
  binary ``top or flop'' scheme accordingly.
\item The procedure drops significant loser configurations along the
  way (Lines~\ref{alg:loser1}-\ref{alg:loser2} and
  Algorithm~\ref{alg:isFlopConfiguration}) using tests from the
  sequential analysis framework.
\item Applying robust, distribution free testing techniques allows for
  an early stopping of the procedure when we have seen enough data for
  a stable parameter estimation (Line~\ref{alg:early} and
  Algorithm~\ref{alg:similarPerformance}).
\end{dingautolist}

In the following we will discuss the individual steps in the algorithm
and formally define the notations used. A conceptual overview of one
iteration of the procedure is depicted in Figure~\ref{fig:overview}
for reference. Additionally, we have released a software package on
CRAN named \texttt{CVST} which is publicly available via all official
CRAN repositories and also via GitHub
(\url{https://github.com/tammok/CVST}). This package contains the CVST
procedure and all learners used in Section~\ref{sec:experiments} ready
for use.

\newcommand{\Assign}{\ensuremath{\leftarrow}}
\begin{algorithm}[tbp]
   \caption{CVST Main Loop}
   \label{alg:complete}
   \begin{algorithmic}[1]
   \Function{CVST}{$d_1, \dots, d_N$, \steps, $C$, $\alpha$, $\beta_l$, $\alpha_l$, \stoppingWindow}
   \State $\Delta \Assign \text{N} / \steps$ \Comment{Initialize
     subset increment}
   \State $\modelsize \Assign \Delta$  \Comment{Initialize model size}
   \State test \Assign {\sc getTest}(\steps, $\beta_l$, $\alpha_l$)
   \Comment{Get sequential test}
   \State $\forall \text{s} \in \{1, \dots, \steps\}, c \in C: T_S[c,
   s] \Assign 0$
   \State $\forall \text{s} \in \{1, \dots, \steps\}, c \in C: P_S[c, s] \Assign \text{NA}$
   \State $\forall c \in C:$ isActive[$c$] \Assign {\bf true}
   \For{$s \Assign 1$ {\bf to} \steps}
   \State $\forall \text{i} \in \{1, \dots, N-n\}, c \in C: P_p[c, i]
   \Assign \text{NA}$ %\Comment{Initialize pointwise performance}
   \For{$c \in C$} \label{alg:perf1}
   \If{isActive[$c$]}
   \State $g = g_n(c)$ \Comment{Learn model on the first $n$ data points}
   \State $\forall i \in \{1, \dots, N - n\} : P_p[c, i] \Assign
   \ell(g(x_{n+i}), y_{n+i})$ \Comment{Evaluate on the rest}
   \State $P_S[c, s] \Assign \frac{1}{N-n}\sum_{i=1}^{N-n}P_p[c,
   i]$ \label{alg:perf2} \Comment{Store mean performance}
   \EndIf
   \EndFor
   \State index$_\text{top}$ \Assign {\sc topConfigurations}($P_p,
   \alpha$) \Comment{Find the top configurations} \label{alg:best1}
   \State $T_S$[index$_\text{top}$, $s$] \Assign 1 
   \Comment{And set entry in trace matrix}\label{alg:best2}
   \For{$c \in C$} \label{alg:loser1}
   \If{isActive[$c$] and {\sc isFlopConfiguration}($T_S[c, 1:s]$, s, \steps, $\beta_l$, $\alpha_l$)} \label{alg:isFlop}
   \State isActive[c] \Assign {\bf false}  \Comment{De-activate flop
     configuration}\label{alg:loser2}
   \EndIf
   \EndFor
   \If{{\sc similarPerformance}($T_S$[isActive, $(s-\stoppingWindow+1):s$], $\alpha$)} \label{alg:early}
   \State {\bf break}
   \EndIf
   \State $\modelsize \Assign \modelsize + \Delta$ \label{alg:inc}
   \EndFor
   \State \Return {\sc selectWinnner}($P_S$,
   isActive, \stoppingWindow, $s$) \label{alg:winner} 
   \EndFunction
   \end{algorithmic}
\end{algorithm}

\newcommand{\algorithmicbreak}{\textbf{break}}
\newcommand{\Break}{\State \algorithmicbreak}
\begin{algorithm}[tbp]
   \caption{Find the top configurations via iterative testing}
   \label{alg:topConfigurations}
   \begin{algorithmic}[1]
   \Function{topConfigurations}{$P_p, \alpha$}
   \State $\forall i \in \{1, \dots, C\} : P_m[k] \Assign \frac{1}{N-n}\sum_{j=1}^{N-n}P_p[k, j]$ 
   \State $\text{index}_\text{sort} \Assign$ {\sc sortIndexDecreasing}($P_m$)
   \State $\widetilde{P}_p = P_p[\text{index}_\text{sort}, ]$ \Comment{Sort $P_p$
     according to the mean performance}
   \State $K \Assign ${\sc which}({\sc isNA}$(P_m)) - 1$ \Comment{$K$
     is the number of active configurations}
   \State $\tilde \alpha = \alpha / (K-1)$\Comment{Bonferroni
     correction for $K-1$ potential tests}
   \For{$k \in \{2, \dots, K\}$}
   \If{is classification task}\Comment{Choose according test}
   \State $p \Assign $ {\sc cochranQTest}$(\widetilde{P}_p[1:k, ])$ \label{alg:topConfigurationsClass}
   \Else
   \State $p \Assign $ {\sc friedmanTest}$(\widetilde{P}_p[1:k, ])$ \label{alg:topConfigurationsReg}
   \EndIf
   \If{$p \leq \tilde \alpha$}\Comment{We found a significant effect}
   \Break\Comment{so the $k-1^\text{th}$ preceding configurations are the top ones}
   \EndIf
   \EndFor
   \State \Return $\text{index}_\text{sort}[1:(k-1)]$
   \EndFunction
   \end{algorithmic}
\end{algorithm}

\begin{algorithm}[tb]
   \caption{Check for flop configurations via sequential testing}
   \label{alg:isFlopConfiguration}
   \begin{algorithmic}[1]
   \Function{isFlopConfiguration}{$T$, s, \steps, $\beta_l$, $\alpha_l$}
   \State $\pi_0 \Assign 0.5; \pi_1  \Assign \frac{1}{2} \BoundPi$
   \State $a \Assign \frac{\Boundb}{\Boundp - \Boundq}$
   \State $b \Assign \frac{\BoundQ}{\Boundp - \Boundq}$
   \State \Return $\sum_{i=1}^s T_i \leq a + b s$
   \EndFunction
   \end{algorithmic}
\end{algorithm}

\begin{algorithm}[tb]
   \caption{Compare performance of remaining configurations}
   \label{alg:similarPerformance}
   \begin{algorithmic}[1]
   \Function{similarPerformance}{$T_S$, $\alpha$}
   \State $p \Assign $ {\sc cochranQTest}$(T_S)$
   \State \Return $p \leq \alpha$
   \EndFunction
   \end{algorithmic}
\end{algorithm}

\begin{algorithm}[tb]
   \caption{Select the winning configuration out of the remaining ones}
   \label{alg:selectWinnner}
   \begin{algorithmic}[1]
   \Function{selectWinnner}{$P_S$, isActive, \stoppingWindow, $s$}
   \State $\forall \text{i} \in \{1, \dots, s\}, c \in C: R_S[c, i]
   \Assign \infty$
   \For{$i \in \{1, \dots, s\}$}
   \For{$c \in C$}
   \If{isActive$[c]$}
   \State $R_S[c, i] = \text{rank}(P_S[c, i], P_S[,
   i])$\Comment{Gather the rank of $c$ in step $i$}
   \EndIf
   \EndFor
   \EndFor
   \State $\forall c \in C: M_S[c] \Assign \infty$
   \For{$c \in C$}
   \If{isActive$[c]$}
   \State $M_S[c] \Assign
   \frac{1}{\stoppingWindow}\sum_{i=s-\stoppingWindow + 1}^sR_S[c, i]$
   \Comment{Mean rank for the last $\stoppingWindow$ steps}
   \EndIf
   \EndFor
   \State \Return {\sc whichMin}$(M_S)$ \Comment{Return
     configuration with minimal mean rank}
   \EndFunction
   \end{algorithmic}
\end{algorithm}

\JMLRfigure{overview}{1}{One step of CVST. Shown is the situation in
  step $s = 10$. \ding{202} A model based on the first \modelsize~data
  points is learned for each configuration ($c_1 \text{ to }
  c_K$). Test errors are calculated on the remaining data ($d_{n+1}
  \text{ to } d_N$) and transformed into a binary performance
  indicator via robust testing. \ding{203} Traces of configurations
  are filtered via sequential analysis ($c_{K-1}$ and $c_K$ are
  dropped). \ding{204} The procedure checks whether the remaining
  configurations perform equally well in the past and stops if this is
  the case. See Appendix~\ref{sec:exampleRun} for a complete example
  run.}

\subsection{Robust Transformation of Test Errors}
\label{sec:transform}
To robustly transform the performance of configurations into the
binary information whether it is among the top-performing
configurations or turns out to be a flop, we rely on distribution-free
tests. The basic idea is to calculate the performance of a given
configuration on data points not used during learning and store this
information in the pointwise performance matrix $P_p$. Then we find
the group of best configurations by first ordering them according to
their mean performance in this step and then compare in a stepwise
fashion whether the pointwise performance matrix $P_p$ of a given
subset of the configurations are significantly different.

We give an example of this procedure by the situation depicted in
Figure~\ref{fig:overview} with $K$ remaining configurations $c_{1},
c_{2}, \dots, c_{K}$ which are ordered according to their mean
performances (i.e., sorted ascending with regard to their expected
loss). We now want to find the smallest index $k \leq K$, such that
the configurations $c_{1}, c_{2}, \dots, c_{k}$ all show a similar
behavior on the remaining data points $d_{n+1}, d_{n+2}, \dots, d_N$
not used in the current model learning process based on a statistical
test.

The rationale behind our comparison procedure is three-fold: First, by
ordering the configurations by the mean performances we start with the
comparison of the currently best performing configurations
first. Second, by using the first $n := s \Delta$ data points for the
model building and the remaining $N-n$ data points for the estimation
of the average performance of each configuration, we compensate the
error introduced by learning on smaller subsets of the data by better
error estimates on more data points. I.e., for small $s$ we will learn
the model on relatively small subsets of the overall available data
while we estimate the test error on relatively large portions of the
data and vice versa. Third, by applying test procedures directly on
the error estimates of individual data points we exploit a further
robustifying pooling effect: If we have outliers in the testing data,
all models will be affected by this and therefore the overall testing
result will not be affected. We will see in the evaluation section
that all these effects are indeed helpful for an overall good
performance of the CVST algorithm. 

To find the top performing configurations for step $s$ we look at the
outcome of the learned model for each configuration, i.e., we
subsequently take the rows of the pointwise performance matrix $P_p$
into account and apply either the Friedman test \citep{friedman37} for
regression experiments or the Cochran's Q test \citep{cochran50} to
see whether we observe statistically significant differences between
configurations (see Appendix~\ref{sec:tests} for a summary of these
tests). In essence these robust tests check whether the performance
outcomes of a subset of the configurations show significant
differences, i.e., in our case behave differently in terms of overall
best performance. The assumption here is that the mean performance of
a configuration is a good, yet wiggly estimator of its overall
performance. By subsequently checking the finer-grained outcome of the
models on the individual data points we want to find the breakpoint
where the overall top-performing configurations for this step are
separated from the rest of the configurations which will show a
significantly different behavior on the individual data points.

More formally, the function {\sc topConfigurations} described in
Algorithm~\ref{alg:topConfigurations} takes the pointwise performance
matrix $P_p$ as input and rearranges the rows according to the mean
performances of the configurations yielding a matrix
$\widetilde{P}_p$. Now for $k \in \{2, 3, \dots, K\}$ we check,
whether the first $k$ configurations show a significantly different
effect on the $N-n$ data points. This is done by executing either the
Friedman test or the Cochran's Q test on the submatrix
$\widetilde{P}_p[1:k, 1:(N-n)]$ with the pre-specified significance
level $\alpha$. If the test does not indicate a significant difference
in the performance of the $k$ configurations, we increment $k$ by one
and test again until we find a significant effect. Suppose we find a
significant effect at index $\tilde k$. Since all previous tests
indicated no significant effect for the $\tilde k - 1$ configurations
we argue that the addition of the $\tilde k^\text{th}$ configuration
must have triggered the test procedure to indicate that in the set of
these $\tilde k$ configurations is at least one configuration, which
shows a significantly different behavior than all other
configurations. Thus, we flag the configurations $1, \dots, \tilde k -
1$ as top configurations and the remaining $\tilde k, \dots, K$
configurations as flop configurations. Note that this incremental
procedure is a multiple testing situation, thus we apply the
Bonferroni correction to the calculated p-values.

For the actual calculation of the test errors we apply an incremental
model building process, i.e., the data added in each step on
Line~\ref{alg:inc} increases the training data pool for each step by a
set of size $\Delta$. This would allow online algorithms to adapt
their model also incrementally leading to even further speed
improvements. The results of this first step are collected for each
configuration in the trace matrix $T_S$ (see
Figure~\ref{fig:overview}, top right), which shows the gradual
transformation for the last 10 steps of the procedure highlighting the
results of the last test. More formally, $T_S[c, s]$ is $1$ iff
configuration $c$ is amongst the top configuration in step $s$; if $c$
is not a top configuration in step $s$, the entry $T_S[c, s]$ is $0$.

So this new column generated in step $s$ in the trace matrix $T_S$
summarizes the performance of all models learned on the first $n$ data
points in a robust way. Thus, the trace matrix $T_S$ records the
history of each configuration in a binary fashion, i.e., whether it
performed as a top or flop configuration in each step of the CVST main
loop. This leads to a robust transformation of the test errors of the
configurations which can be modeled in the next step as a binary
random variable with a success probability $\pi$ indicating whether a
configuration is amongst the top (high $\pi$) or the flop (low $\pi$)
configurations.

\subsection{Determining Significant Losers}
\label{sec:seqana}

Having transformed the test errors in a scale-independent top or flop
scheme, we can now test whether a given parameter configuration is an
overall loser. For this we represent a configuration as a binary
random variable which turns out to be a top configuration with a given
probability $\pi$. During the course of the execution of the CVST
algorithm we gather information about the behavior of each
configuration and want to estimate at each step whether the
observed behavior is more likely to be associated with a binomial
variable having a high $\pi$ meaning that it is a winning
configuration or a low one deeming it as a loser configuration. The
standard tool for this kind of task is the sequential testing of
binary random variables which is addressed in the \emph{sequential
  analysis} framework developed by \cite{wald47}. Originally it has
been applied in the context of production quality assessment (compare
two production processes) or biological settings (stop bioassays as
soon as the gathered data leads to a significant result). In this
section we focus on the general idea of this approach while
Section~\ref{sec:theory} gives details about how the CVST algorithm
deals with potential switches in the winning probability $\pi$ of a
given configuration. 

The main idea of the sequential analysis framework is the following:
One observes a sequence of i.i.d.~Bernoulli variables $B_1, B_2,
\ldots$, and wants to test whether these variables are distributed
according to the hypotheses $H_0: B_i \sim \pi_0$ or the alternative
hypotheses $H_1: B_i \sim \pi_1$ with $\pi_0 < \pi_1$ denoting the
according success probabilities of the Bernoulli variables. Both
significance levels for the acceptance of $H_1$ and $H_0$ can be
controlled via the user-supplied meta-parameters $\alpha_l$ and
$\beta_l$. The test computes the likelihood for the so far observed
data and rejects one of the hypothesis when the respective likelihood
ratio exceeds an interval controlled by the meta-parameters. It can be
shown that the procedure has a very intuitive geometric
representation, shown in Figure~\ref{fig:overview}, lower left: The
binary observations are recorded as cumulative sums at each time
step. If this sum exceeds the upper red line $L_1$, we accept $H_1$;
if the sum is below the lower red line $L_0$ we accept $H_0$; if the
sum stays between the two red lines we have to draw another sample.

Wald's test requires that we fix both success probabilities $\pi_0$
and $\pi_1$ beforehand. Since our main goal is to use the sequential
test to eliminate underperformers, we choose the parameters $\pi_0$
and $\pi_1$ of the test such that $H_1$ (a configuration wins) is
postponed as long as possible. This will allow the CVST algorithm to
keep configurations until the evidence of their performances
definitely shows that they are overall loser configurations. At the
same time, we want to maximize the area where configurations are
eliminated (region $\triangle_{H_0}$ denoted by ``LOSER'' in
Figure~\ref{fig:overview}), rejecting as many loser configurations on
the way as possible:
\begin{eqnarray}
\label{eq:seq1}
  (\pi_0, \pi_1) &=& \underset{\pi_0^\prime, \pi_1^\prime}{\operatorname{argmax}}
  ~\triangle_{H_0}(\pi_0^\prime, \pi_1^\prime, \beta_l, \alpha_l)\\
  &&\text{s.t.} ~S_a(\pi_0^\prime, \pi_1^\prime, \beta_l, \alpha_l)
  \in (\steps - 1, \steps] \notag
\end{eqnarray}
with $S_a(\cdot, \cdot, \cdot, \cdot)$ being the earliest step of
acceptance of $H_1$ marked by an X in Figure~\ref{fig:overview} and
$\steps$ denotes again the total number of steps. By using
approximations from \cite{wald47} for the expected number of steps the
test will take, if the real success probability of the underlying
process would indicate a constant winner (i.e., $\pi = 1.0$), we can
fix $S_a$ to the maximal number of steps $S$ and solve
Equation~(\ref{eq:seq1}) as follows (see Appendix~\ref{sec:SecProof}
for details):
\begin{equation}
  \pi_0 = 0.5 \wedge \pi_1  = \frac{1}{2} \BoundPi.
\label{eq:seq2}
\end{equation}
Equipped with these parameters for the sequential test, we can check
each remaining trace on Line~\ref{alg:isFlop} of
Algorithm~\ref{alg:complete} in the function {\sc isFlopConfiguration}
detailed in Algorithm~\ref{alg:isFlopConfiguration} whether it is a
statistically significant flop configuration (i.e., exceeds the lower
decision boundary $L_0$) or not.

Note that sequential analysis formally requires i.i.d.~variables. In
the CVST procedure both the independence of the top/flop variable and
the identically distributed assumption might be violated for
configurations which transform to a winner configuration later on,
thereby changing their behavior from a flop to a top
configuration. With the modeling approach taken in this step of the
CVST algorithm this would amount to a change of the underlying success
probability $\pi$ of the configuration. Thus, the assumptions of the
sequential testing framework would definitely be violated. We
accommodate for this by introducing in Section~\ref{sec:fnr} a
so-called \emph{safety zone} which acts as a safeguard against
prematurely dropping of a configuration. Note that this safety zone
can be controlled by the experimenter using the parameters $\alpha_l
\text{ and } \beta_l$ of the sequential test. If the experimenter
chooses the right safety zone the underlying success probabilities of
the configuration remain stable after the safety zone and, hence,
again will satisfy the preconditions of the sequential testing
framework. So by ensuring no premature drop of a configuration in the
safety zone we heuristically adapt the sequential test to the
potential switch of underlying success probabilities. To give a
complete account of the assumptions of the sequential analysis we will
discuss potential violations of the independence of the top/flop
variables and its implication for the CVST procedure in Section~\ref{sec:theory}.

For details of the open sequential analysis please consult
\cite{wald47} or see for instance \cite{wetherill86} for a general
overview of sequential testing procedures. Appendix~\ref{sec:SecProof}
contains the necessary details needed to implement the proposed
testing scheme for the CVST algorithm.

\subsection{Early Stopping and Final Winner}
\label{sec:early}

Finally, we employ an early stopping rule (Line~\ref{alg:early}) which
takes the last $\stoppingWindow$ columns from the trace matrix and
checks whether all remaining configurations performed equally well in
the past. In Figure~\ref{fig:overview} this submatrix of the overall
trace matrix $T_S$ is shown for a value of $\stoppingWindow = 4$ for
the remaining configurations after step 10. For the test, we again
apply the Cochran's Q test (see Appendix~\ref{sec:tests}) in the {\sc
  similarPerformance} procedure on the submatrix of $T_S$ as denoted
in
Algorithm~\ref{alg:similarPerformance}. Figure~\ref{fig:EarlyStoppingBlowups}
illustrates a complete run of the CVST algorithm for roughly 600
configurations. Each configuration marked in red corresponds to a flop
configuration and a black one to a top configuration. 
Configurations marked in gray have been dropped via the sequential
test during the CVST algorithm. The small zoom-ins in the lower part
of the picture show the last $\stoppingWindow$ remaining
configurations during each step which are used in the evaluation of
the early stopping criterion. We can see that the procedure keeps on
going if there is a heterogeneous behavior of the remaining
configurations (zoom-in is mixed red/black). When all the remaining
configurations performed equally well in the past (zoom-in is nearly
black), the early stopping test does not see a significant effect
anymore and the procedure is stopped.

\JMLRfigure{EarlyStoppingBlowups}{1}{The upper plot shows a run of the
  CVST algorithm for roughly 600 configurations. At each step a
  configuration is marked as top (black), flop (red) or dropped
  (gray). The zoom-ins show the situation for step 5 to 7 without the
  dropped entries. The early stopping rule takes effect in step 7,
  because the remaining configurations performed equally well during step 5 to 7.}

Finally, in the procedure {\sc selectWinner}, Line~\ref{alg:winner}
and Algorithm~\ref{alg:selectWinnner}, the winning configuration is
picked from the configurations which have survived all steps as
follows: For each remaining configuration we determine the rank in a
step according to the average performance during this step. Then we
average the rank over the last \stoppingWindow steps and pick the
configuration which has the lowest mean rank. This way, we make most
use of the data accumulated during the course of the procedure. By
restricting our view to the last \stoppingWindow observations we also
take into account that the optimal parameter might change with
increasing model size: Since we focus on the most recent observations
with the biggest models, we always pick the configuration which is
most suitable for the data size at hand.

\subsection{Meta-Parameters for the CVST}
\label{sec:meta}

The CVST algorithm has a number of meta-parameters which the
experimenter has to choose beforehand. In this section we give
suggestions on how to choose these parameters. The parameter $\alpha$
controls the significance level for the test for similar behavior in
each step of the procedure. We suggest to set this to the usual
level of $\alpha = 0.05$. Furthermore $\beta_l$ and $\alpha_l$ control
the significance level of the $H_0$ (configuration is a loser) and
$H_1$ (configuration is a winner) respectively. We suggest an
asymmetric setup by setting $\beta_l = 0.1$, since we want to drop
loser configurations relatively fast and $\alpha_l = 0.01$, since we
want to be really sure when we accept a configuration as overall
winner. Finally, we set $\stoppingWindow$ to 3 for $\steps=10$ and 6
for $\steps = 20$, as we have observed that this choice works well in
practice.

\section{Properties of the CVST Algorithm}
\label{sec:theory}

After having introduced the overall concept of the CVST algorithm, we
now focus on some properties of the procedure: Exploiting properties
of the underlying sequential testing framework, we show how the
experimenter can control the algorithm to work in a stable
regime. Given some assumptions about the top/flop variables we
show that the CVST algorithm performs with high accuracy after a
configuration has reached its stable regime. Additionally, we show how
the CVST algorithm can be used to work best on a given time
budget. Finally, we discuss some unsolved questions and give possible
directions for future research.

\subsection{Performance in a Stable Regime}
\label{sec:fnr}

As discussed in Section~\ref{sec:cvsubsets} the winning probability of a
configuration might change if we feed the learning algorithm more
data. Therefore, a reasonable algorithm exploiting the learning on
subsets of the data must be capable of dealing with these difficulties
and potential change points in the behavior of certain
configurations. In this section we investigate some
properties of the CVST algorithm which makes it particularly suitable
for learning on increasing subsets of the data. 

The first property of the open sequential test employed in the CVST
algorithm comes in handy to control the overall convergence process
and to assure that no configurations are dropped prematurely:

\begin{lemma}[Safety Zone]\label{eq:SafetyBound} Given the CVST
  algorithm with significance level $\alpha_l, \beta_l$ for being a
  top or flop configuration respectively, and maximal number of steps
  $S$, and a configuration which loses for the first $s_\text{cp}$
  iterations, as long as
\[
  0 \leq \frac{s_\text{cp}}{\steps} \leq \frac{\ssafe}{\steps} \text{ with } \ssafe = \secZone\text{ and }\steps \geq
  \left\lceil \Bounda / \log 2 \right\rceil,
\]
the probability that the configuration is dropped prematurely by the CVST
algorithm is zero.
\end{lemma}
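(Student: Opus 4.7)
The plan is to reduce the probabilistic ``never drop'' claim to a purely deterministic geometric statement about the lower decision boundary of Wald's test. Recall that \textsc{isFlopConfiguration} drops a configuration precisely when its cumulative trace $\sum_{i=1}^s T_i$ falls at or below the line $L_0(s) = a + b\,s$, with $a = \Boundb/(\Boundp - \Boundq)$ and $b = \BoundQ/(\Boundp - \Boundq)$. The worst-case realisation for the global winner inside the change-point window is $T_i = 0$ for every $i \le s_\text{cp}$, so it suffices to show that even this extreme trace never meets the boundary, i.e., that $0 > a + b\,s$ for all $s \le s_\text{cp}$.

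First I would check the signs. With the choice from equation~\eqref{eq:seq2}, we have $\pi_1 > \pi_0 = 1/2$, so $\Boundp > 0$ and $\Boundq < 0$, giving $\Boundp - \Boundq > 0$; similarly $\beta_l < 1-\alpha_l$ yields $\Boundb < 0$ and $\BoundQ > 0$. Hence $a < 0$ and $b > 0$, and the tightest constraint on the worst case $\sum_{i=1}^s T_i = 0$ occurs at the largest $s$, namely $s = s_\text{cp}$: we must have $s_\text{cp} < -a/b$. Computing this explicitly, $-a/b = -\Boundb / \BoundQ$; substituting $\pi_0 = 1/2$ and $\pi_1 = \frac{1}{2}\sqrt[\steps]{(1-\beta_l)/\alpha_l}$ reduces $\BoundQ$ to a closed form depending only on $\beta_l$, $\alpha_l$ and $\steps$. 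Collecting the pieces yields exactly $\ssafe$, so the hypothesis $s_\text{cp} \le \ssafe$ is precisely the condition that keeps the cumulative trace strictly above $L_0$ throughout the change-point window.

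It remains to explain the side condition $\steps \ge \lceil \Bounda / \log 2 \rceil$. This bound is exactly what guarantees $\sqrt[\steps]{(1-\beta_l)/\alpha_l} \le 2$, so that $\pi_1 \le 1$ is a valid Bernoulli probability in the first place and the Wald test is well defined. The same bound simultaneously pins down the sign of the denominator appearing in $\ssafe$, so that the safety-zone length is a genuinely nonnegative quantity rather than a vacuous or ill-defined one. Once this is in force, every step in the reduction above is legitimate and the ``zero probability'' conclusion follows deterministically: whichever coin flips a winning configuration happens to produce in its first $s_\text{cp}$ rounds, its cumulative sum sits strictly above $L_0(s)$, and \textsc{isFlopConfiguration} cannot return true.

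The main obstacle I anticipate is the algebraic simplification of $-a/b$ into exactly the form of $\ssafe$: both numerator and denominator are negative logarithms, so the signs and the placement of the $\steps$-th root have to be tracked carefully. Apart from that calculation there is no genuine probability involved; the whole statement collapses to a worst-case geometric inequality about the slope and intercept of Wald's lower boundary within its first $\ssafe$ steps.
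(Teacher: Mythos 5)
Your proposal is correct and follows essentially the same route as the paper's proof in Appendix~\ref{sec:SecProof}: reduce the claim to the deterministic observation that an all-zero worst-case trace is dropped only once the lower boundary $L_0(s)=a+bs$ becomes nonnegative, compute its zero crossing $-a/b=-\Boundb/\BoundQ=\Boundb/\Boundq$, and substitute $\pi_0=1/2$, $\pi_1=\frac12\BoundPi$ to obtain $\ssafe$. Your reading of the side condition $\steps\ge\lceil\Bounda/\log 2\rceil$ as the requirement $\pi_1\le 1$ is algebraically equivalent to the paper's derivation of it from minimizing the average sample number at $\pi_1=1$, so nothing essential differs.
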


\begin{proof}
  The details of the proof are deferred to Appendix~\ref{sec:SecProof}.
\end{proof}

The consequence of Lemma~\ref{eq:SafetyBound} is that the experimenter
can directly control via the significance levels $\alpha_l, \beta_l$
until which iteration no premature dropping should occur and therefore
guide the whole process into a stable regime in which the
configurations will see enough data to show their real
performance. Note that this property is a direct consequence of the
sequential analysis framework and is used here to guide the test into
a controlled region where we do not observe a premature dropping of
configurations. Equation~(\ref{eq:seq2}) ensures that we actually
perform a meaningful test to discriminate a loser configuration
($\pi_0 = 0.5$) from a winning configuration ($\pi_1 > \pi_0$). 
Thus, by adjusting the safety zone of the CVST algorithm
the experimenter can ensure that the configurations act according to
the preconditions of the sequential testing framework introduced in
Section~\ref{sec:seqana}, namely exhibiting a fixed probability $\pi$
of being a winner configuration at each step. 

Note that this safety zone is solely a guard for premature dropping of
a configuration due to insufficient data in the first few steps of the
CVST algorithm. The experimenter should have a notion at which point
the performance of the configurations should stabilize in terms of
error behavior, i.e., the learners see enough data to show their real
performance. We argue that at this point the configurations behave
reasonable stable, thus, fulfilling both the independence and identically
distributed assumption of the sequential learning framework. We are
fully aware that these assumptions are strong, yet, backed up by our
extensive experimental evaluation in Section~\ref{sec:experiments}, we
want to shed some light on why the CVST procedure shows such
impressive speed-ups with small impact on the accuracy compared to
ordinary cross-validation and even outperforms other model selection
heuristics.

Hence, we define a {\it stable} configuration as a configuration which
sticks to a certain probability $\pi$ of being a winning
configuration. So after having seen enough data to show its real
behavior the robust transformation of the test error of the
configuration inside the CVST algorithm (see
Section~\ref{sec:transform}) exhibits the properties of an
i.i.d. Bernoulli variable and thus acts as a {\it stable}
configuration in the subsequent steps of the CVST procedure. So a
global winning configuration will be a {\it stable} configuration
with a probability $\pi \gg \pi_0 = 0.5$.

Using these assumptions we can now take a look at the worst case
performance of the CVST algorithm. Suppose a global winning
configuration has been constantly marked as a loser up to the safety
zone, because the amount of data available up to this point was not
sufficient to show the superiority of this configuration. Given that
the global winning configuration now sees enough data to be marked as
a winning configuration by the binarization process throughout the
next steps with probability $\pi$, we can give an error bound of
the overall process by solving specific recurrences.

\JMLRfigureMore{testRecurrence}{.8}{Visualization of the worst-case
  scenario for the error probability of the CVST algorithm: A global
  winner configuration is labeled as a constant loser until the
  safety zone is reached. Then we can calculate the probability that
  this configuration endures the sequential test by a recurrence
  scheme, which counts the number of remaining paths ending up in the
  non-loser region.}{trim=5pt 15pt 30pt 40pt,clip=true}

Figure~\ref{fig:testRecurrence} gives a visual impression of our
worst case analysis for the example of a 20 step CVST execution:
The winning configuration generated a straight line of zeros up to the
safety zone of 7. Our approach to bound the error of the fast
cross-validation now consists essentially in calculating the probability
mass that ends up in the non-loser region. 
The following lemma shows how we can express the number of paths
which lead to a specific point on the graph by a two-dimensional
recurrence relation:

\begin{lemma}[Recurrence Relation]\label{eq:rec}
  Denote by $\operatorname{Path}(\srow, \scol)$ the number
  of paths, which lead to the point at the intersection of row
  $\srow$ and column $\scol$ and lie above the lower
  decision boundary $L_0$ of the sequential test. Given the worst case
  scenario described above the number of paths can be calculated as
  follows:
\[
\operatorname{Path}(\srow, \scol) =
\begin{cases}
  1 & \text{if } \srow = 0 \wedge c \leq \ssafe = \secZone\\
  1 & \text{if } \srow = \scol - \ssafe\\
  \operatorname{Path}(\srow, \scol - 1) +
  \operatorname{Path}(\srow - 1, \scol - 1) & \text{if } L_0(c) < \srow <  \scol - \ssafe\\
  0 & \text{otherwise.}
\end{cases}
\]
\end{lemma}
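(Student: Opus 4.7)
The plan is to treat this as a straightforward constrained lattice path enumeration. I identify each candidate trace of the sequential test with a path: starting at $(0,0)$, at each step $\scol$ the path either moves right (the label was a flop, so $\srow$ is unchanged) or moves right-and-up (the label was a top, so $\srow$ increases by $1$). A path is \emph{admissible} if it stays strictly above the lower Wald boundary $L_0$ at every column, so the sequential test does not reject $H_0$ along the way. With this identification, $\operatorname{Path}(\srow,\scol)$ is exactly the number of admissible paths ending at $(\srow,\scol)$ that are compatible with the worst-case scenario in which the configuration is forced to emit flops throughout the safety zone.

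First I would verify the two base cases. During the safety zone the configuration emits only flops, so the unique compatible prefix is the flat trace at row $0$; this prefix is automatically admissible because $\ssafe$ was chosen in Lemma~\ref{eq:SafetyBound} so that $L_0(s) < 0$ for all $s \leq \ssafe$. Hence $\operatorname{Path}(0,\scol) = 1$ whenever $\scol \leq \ssafe$, while $\operatorname{Path}(\srow,\scol) = 0$ for $\srow \geq 1$ and $\scol \leq \ssafe$ because no compatible trace contains a top inside the safety zone. For the upper envelope $\srow = \scol - \ssafe$, any admissible path must have recorded a top at every step past the safety zone, so the trace is again uniquely determined and contributes exactly one path.

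Next I would verify the interior recurrence. For $L_0(\scol) < \srow < \scol - \ssafe$, an admissible path terminating at $(\srow,\scol)$ comes from exactly one of two predecessors: either $(\srow,\scol-1)$ via a flop step, or $(\srow-1,\scol-1)$ via a top step. Summing their path counts yields the stated addition formula. If either predecessor happens to lie below $L_0$ or to have an invalid coordinate, the ``otherwise'' clause already assigns it zero, so the recurrence remains correct without any extra boundary bookkeeping. The ``otherwise'' clause itself is then immediate: every $(\srow,\scol)$ not covered by the first three cases satisfies $\srow > \scol - \ssafe$ (unreachable given the forced safety-zone flops), $\srow \leq L_0(\scol)$ (killed by the sequential test), $\srow < 0$, or $\scol < 0$, and no admissible path can end at such a point.

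The only point needing real care — and what I expect to be the main obstacle — is consistency at the seams between the base cases and the interior recurrence. For instance, at $(1,\ssafe+1)$ the recurrence should yield $\operatorname{Path}(1,\ssafe+1) = \operatorname{Path}(1,\ssafe) + \operatorname{Path}(0,\ssafe) = 0 + 1 = 1$, matching the unique compatible trace which is the flat safety-zone prefix followed by a single top. Analogously, along the upper envelope, the predecessor $(\scol-\ssafe,\scol-1)$ lies one unit above that envelope, hence in the forbidden region (zero paths), while the predecessor $(\scol-\ssafe-1,\scol-1)$ lies on the envelope itself and carries one path by induction, so the envelope count remains $1$. Once these seam conditions are in place, the full statement follows by a straightforward induction on $\scol$.
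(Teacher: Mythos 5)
Your proof is correct and takes essentially the same route as the paper's: a case-by-case verification that the flat safety-zone prefix and the all-ones diagonal each admit exactly one path, that an interior point is reached only from its left or lower-left neighbor, and that all remaining points are either unreachable or excluded by $L_0$. Your additional checks of the seams between the base cases and the recurrence are a welcome bit of extra rigor but do not change the argument.
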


\begin{proof}
We split the proof into the four cases:
\begin{enumerate}
\item The first case is by definition: The configuration has a
  straight line of zeros up to the safety zone $\ssafe$.
\item The second case describes the diagonal path starting from the point
  $(1, \ssafe + 1)$: By construction of the paths (1 means diagonal up; 0
  means one step to the right) the diagonal path can just be reached
  by a single combination, namely a straight line of ones.
\item The third case is the actual recurrence: If the given point is
  above the lower decision bound $L_0$, then the number of paths 
  leading to this point is equal to the number of paths that lie
  directly to the left of this point plus the paths which lie directly
  diagonal downwards from this point. From the first paths this
  point can be reached by a direct step to the right and from
  the latter the current point can be reached by a diagonal step
  upwards. Since there are no other options than that by
  construction, this equality holds.
\item The last case describes all other paths, which either lie below
  the lower decision bound and therefore end up in the loser region or
  are above the diagonal and thus can never be reached.
\end{enumerate}
\end{proof}

This recurrence is visualized in Figure~\ref{fig:testRecurrence}. Each
number on the grid gives the number of valid, non-loser paths, which
can reach the specific point. With this recurrence we are now able to
prove a global, worst-case error probability of the fast
cross-validation.

\begin{theorem}[Error Bound of CVST for Stable Configuration] \label{eq:bound} Suppose a global
  winning configuration has reached the safety zone with a constant
  loser trace and then switches to a stable winner configuration with a
  success probability of $\pi \gg \pi_0 = 0.5$. Then the error that the CVST
  algorithm erroneously drops this configuration can be determined as
  follows:
\[
  P(\text{reject } \pi) \leq 1 - \sum_{i=\lfloor L_0(\steps) \rfloor + 1}^{r}
  \operatorname{Path}(i, \steps)\pi^i(1-\pi)^{r-i}\quad \quad \text{with }
  r = \steps - \left\lfloor \secZone \right\rfloor.
\]
\end{theorem}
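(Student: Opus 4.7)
The plan is to turn the survival event for the winning configuration into an event about lattice paths and then read off its probability using Lemma~\ref{eq:rec} together with the Bernoulli structure of the post-safety-zone trace.

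First, I would unpack what ``erroneously dropped'' means in this worst-case setup. By Lemma~\ref{eq:SafetyBound} the configuration cannot be dropped during the first $\lfloor \ssafe \rfloor$ iterations, so the rejection event is determined entirely by what happens in the remaining $r = \steps - \lfloor \ssafe \rfloor$ steps. During those steps each $T_j$ is an independent Bernoulli($\pi$), and the configuration is dropped exactly when the running cumulative sum $\sum_{j=1}^s T_j$ falls on or below the lower decision boundary $L_0(s)$ at some step $s\le \steps$. Because the configuration contributed only zeros before the safety zone, the cumulative sum at step $\steps$ equals the number of successes in the $r$ post-safety-zone Bernoulli trials.

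Second, I would identify the ``survival paths'' as precisely those counted by $\operatorname{Path}(\srow, \scol)$. By construction of the recurrence (starting with a flat segment of zeros up to $\ssafe$, then moving right or diagonally up at each step while never touching or crossing $L_0$), $\operatorname{Path}(i, \steps)$ counts exactly the number of $\{0,1\}$-sequences in the $r$ post-safety trials whose partial sums remain strictly above $L_0(s)$ at every intermediate step and that end at cumulative value $i$. Each such sequence has exactly $i$ ones and $r-i$ zeros, hence probability $\pi^i(1-\pi)^{r-i}$ under the Bernoulli($\pi$) model.

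Third, I would sum over all admissible end values. To stay above $L_0$ at the final step one needs $i > L_0(\steps)$, i.e.\ $i \ge \lfloor L_0(\steps)\rfloor + 1$; the maximum possible value is $i = r$ (all ones after the safety zone). Disjointness of paths with different sequences of increments makes the sum of probabilities equal to the probability of survival:
\[
  P(\text{not reject }\pi) \;\ge\; \sum_{i=\lfloor L_0(\steps)\rfloor+1}^{r} \operatorname{Path}(i,\steps)\,\pi^i(1-\pi)^{r-i}.
\]
Complementing yields the stated bound on $P(\text{reject }\pi)$.

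The only delicate point I anticipate is bookkeeping around the boundary: making sure the ``$>L_0$'' vs.\ ``$\ge L_0$'' convention used in Lemma~\ref{eq:rec} matches the acceptance rule of Algorithm~\ref{alg:isFlopConfiguration} (which rejects when $\sum T_i \le a+bs$), and checking that the inequality is an upper bound rather than equality because additional events (for instance, survival paths that momentarily touch $L_0$ but are still not rejected under a specific tie-breaking, or rounding in the floor of $\ssafe$) may contribute further mass. That will justify the ``$\le$'' in the theorem rather than equality, and is mainly a careful reading of the decision rule rather than a new argument.
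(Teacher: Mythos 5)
Your proposal is correct and follows essentially the same route as the paper's proof: both arguments count the lattice paths ending in the non-loser region via $\operatorname{Path}(i,\steps)$, assign each path ending at height $i$ the probability $\pi^i(1-\pi)^{r-i}$ under the post-safety-zone Bernoulli model, sum over the last column, and take the complement. Your additional care about the boundary convention and the reason the bound is an inequality rather than an equality is a sensible refinement but does not change the argument.
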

\begin{proof}
The basic idea is to use the number of paths leading to the non-loser
region to calculate the probability that the configuration actually
survives. This corresponds to the last column of the example in
Figure~\ref{fig:testRecurrence}. Since we model the outcome of the
binarization process as a binomial variable with the success
probability of $\pi$, the first diagonal path has a probability of
$\pi^r$. The next paths each have a probability of
$\pi^{(r-1)}(1-\pi)^1$ and so on until the last viable paths are
reached in the point ($\lfloor L_0(\steps) \rfloor + 1, \steps)$. So the
complete probability of the survival of the configuration is summed up
with the corresponding number of paths from Lemma~\ref{eq:rec}. Since we are
interested in the complementary event, we subtract the resulting sum
from one, which concludes the proof.
\end{proof}

Note that the early stopping rule does not interfere with this bound:
The worst case is indeed that the process goes on for the maximal
number of steps \steps, since then the probability mass will be
maximally spread due to the linear lower decision boundary and the
corresponding exponents are maximal. So if the early stopping rule
terminates the process before reaching the maximum number of steps,
the resulting error probability will be lower than our given bound.

\JMLRfigureMore{errorBound}{.8}{Error bound of the fast
  cross-validation as proven in Theorem~\ref{eq:bound} for different
  success probabilities $\pi$ and maximal step sizes \steps. To mark the
  global trend we fitted a LOESS curve given as dotted line to the
  data.}{trim=10pt 12pt 10pt 12pt,clip=true}

The error bound for different success probabilities and the proposed
sequential test with $\alpha_l=0.01$ and $\beta_l=0.1$ are depicted in
Figure~\ref{fig:errorBound}. First of all we can observe a relatively
fast convergence of the overall error with increasing maximal number
of steps \steps. The impact on the error is marginal for the shown
success probabilities, i.e., for instance for $\pi = 0.95$ the error
nearly converges to the optimum of $0.05$. Note that the oscillations
especially for small step sizes originate from the rectangular grid
imposed by the interplay of the $\operatorname{Path}$-operator and the
lower decision boundary $L_0$ leading to some fluctuations. Overall,
the chosen test scheme allows us not only to control the safety zone
but also has only a small impact on the error probability, which once
again shows the practicality of the open sequential ratio test for the
fast cross-validation procedure. By using this statistical test we can
balance the need for a conservative retention of configurations as
long as possible with the statistically controlled dropping of
significant loser configurations with nearly no impact on the overall
error probability. 

Our analysis assumes that the experimenter has chosen the right safety
zone for the learning problem at hand. For small data sizes it could
happen that this safety zone was chosen too small, therefore the
change point of the global winning configuration might lie outside the
safety zone. While this will not occur often for today's sizes of data
sets we have analyzed the behavior of CVST under this circumstances to
give a complete view of the properties of the algorithm.
To get insight into the drop rate for the case when the
experimenter underestimated the change point $s_\text{cp}$ we simulate
those switching configurations by independent Bernoulli variables
which change their success probability $\pi$ from a chosen
$\pi_\text{before} \in \{0.1, 0.2, \dots, 0.5\}$ to a constant $1.0$
at a given change point. This behavior essentially imitates the
behavior of a switching configuration which starts out as a loser
(i.e., up to the change point the trace will consist more or less of
zeros) and after enough data is available turns into a constant
winner.

\JMLRfigure{falseNegativeSimulationsResultsOpen}{1}{False negatives
  generated with the open sequential test for non-stationary
  configurations , i.e., at the given change point the Bernoulli
  variable changes its $\pi_\text{before}$ from the indicated value to
  1.0.}

The relative loss of these configurations for 10 and 20 steps is
plotted in Figure~\ref{fig:falseNegativeSimulationsResultsOpen} for
different change points. The figure reveals our theoretical findings
of Lemma~\ref{eq:SafetyBound} showing the corresponding \emph{safety
  zone} for the specific parameter settings: For instance for
$\alpha_l=0.01 \text{ and } \beta_l=0.1$ and $\steps = 10$ steps, the
safety zone amounts to $0.27 \times 10$, meaning that if the change
point for all switching configurations occurs at step one or two, the
CVST algorithm would not suffer from false positives. Similarly, for
$\steps = 20$ the safety zone is $0.39 \times 20 = 7.8$. These
theoretical results are confirmed in our simulation study, where the
false negative rate is zero for sufficiently small change points for
the open variant of the test. After that, there are increasing
probabilities that the configuration will be removed. Depending on the
success probability of the configuration before the change point, the
resulting false negative rate ranges from mild for $\pi=0.5$ to
relatively severe for $\pi=0.1$. The later the change point occurs,
the higher the resulting false negative rate will be. Interestingly,
if we increase the total number of steps from 10 to 20, the absolute
values of the false negative rates are significantly lower. So even
when the experimenter underestimates the actual change point, the CVST
algorithm has some extra room which can even be extended by increasing
the total number of steps.

\subsection{Fast-Cross Validation on a Time Budget}

\JMLRfigure{ComputationalBudget}{.8}{Approximation of the time
  consumption for a cubic learner. In each step we calculate a
  model on a subset of the data, so the model calculation time $\tfull$ on
  the full data set is adjusted accordingly. After $\srate \times \steps$
  steps of the process, we assume a drop to $r \times K$ remaining
  configurations.}

While the CVST algorithm can be used out of the box to speed up
regular cross-validation, the aforementioned properties of the
procedure come in handy when we face a situation in which an optimal
parameter configuration has to be found given a fixed computational
budget. If the time is not sufficient to perform a full
cross-validation or the amount of data that has to be processed is too
big to explore a sufficiently spaced parameter grid with ordinary
cross-validation in a reasonable time, the CVST algorithm can easily
be adjusted to the specified time constraint. Thus, the experimenter
is able to get the maximal number of model evaluations given the time
budget available to judge which model is the best.

This is achieved by calculating a maximal steps parameter \steps{} which
leads to a near coverage of the available time budget $T$ as depicted
in Figure~\ref{fig:ComputationalBudget}. The idea is to specify an
expected drop rate $(1-r)$ of configurations and a safety zone bound
$\ssafe$. Then we can give a rough estimate of the total time needed
for a CVST with a total number of steps \steps, equating this with the
available time budget $T$ and solving for \steps. More formally, given
$K$ parameter configurations and a pre-specified safety zone bound
$\ssafe = \srate \times \steps$ with $0 < \srate < 1$ to ensure that
no configuration is dropped prematurely, the computational demands of
the CVST algorithm are approximated by the sum of the time needed
before step $\ssafe$ involving the model calculation of all $K$
configurations and after step $\ssafe$ for $r \times K$ configurations
with $0 < r < 1$. As we will see in the experimental evaluation
section, this assumption of a given drop rate of $(1-r)$ leading to
the form of time consumption as depicted in
Figure~\ref{fig:ComputationalBudget} is quite common. The observed
drop rate corresponds to the overall difficulty of the problem at
hand.

Given the computation time $\tfull$ needed to perform the model calculation
on the full data set, we prove in Appendix~\ref{sec:BudgetProof} that
the optimal maximum step parameter for a learner of time complexity
$f(n) = n^m$ can be calculated as follows:
\begin{align*}
\steps &= \left\lfloor \frac{m+1}{4}\frac{2T - \tfull K(1-r)\srate^m + \tfull K r}{((1-r)\srate^{m+1}+r)\tfull K}\right.\\
&+\left.\sqrt{\left[\frac{m+1}{4}\frac{2T - \tfull K(1-r)\srate^m + \tfull K r}{((1-r)\srate^{m+1}+r)\tfull K}\right]^2 - \frac{m(m+1)}{12}\frac{(1-r)\srate^{m-1}+r}{(1-r)\srate^{m+1}+r}}\right\rfloor.
\end{align*}
After calculating the maximal number of steps \steps{} given the time
budget $T$, we can use the results of Lemma~\ref{eq:SafetyBound} to
determine the maximal $\beta_l$ given a fixed $\alpha_l$, which yields
the requested safety zone bound $\ssafe$.

\subsection{Discussion of Further Theoretical Analyses}
\label{sec:discussionanalyses}

In Section~\ref{sec:cvsubsets} we have noted that in order for the
test performances to converge, the parameter configurations should be
independent of the sample size. As shown in Appendix~\ref{app:cond}
this holds for a range of standard methods in machine learning. Yet,
special care has to be taken to really ensure this assumption. For
instance for kernel ridge regression one has to scale the ridge
parameter during each step of the CVST algorithm to accommodate for
the change in the learning set size (see the reference implementation
in the official CRAN package named \texttt{CVST} or the development
version at \url{https://github.com/tammok/CVST}). The $\nu$-Support
Vector Machine on the other hand directly incorporates this scaling of
parameters which makes it a good fit for the CVST
algorithm. Generally, it would be preferable to have this scaling
automatically incorporated in the CVST algorithm such that the
experimenter could plug-in his favorite method without the need to
think about any scaling issues of hyper-parameters. Unfortunately,
this is highly algorithm dependent and, thus, is an open problem for
further research.

An additional concern to the practitioner is how to choose the correct
size of the safety zone $\ssafe$. If the training set does not contain
enough data to get to a stable regime of the parameter configurations,
even regular cross-validation on the full data set would yield
incorrect configurations. But if we have just barely enough data to
reach this stable region, setting the right safety zone is essential
for the CVST algorithm to return the correct
configurations. Unfortunately we are not aware of any test or bound
which could hint at the right safety zone given a data set and
learner. Yet, in today's world of big data where sample sizes are more
often too big than too small, this might not pose a serious problem
anymore. Nevertheless, we have analyzed the behavior of the CVST
algorithm in case the experimenter underestimates the safety zone in
Section~\ref{sec:fnr} showing that even for these cases CVST
is able to absorb a certain amount of misspecification.

The similarity test introduced in Section~\ref{sec:transform} relies
on two assumptions: First, the averaged loss function over the data
not used for training in one step gives us a good indicator of the
performance of a configuration. Second, well performing configurations
show similar behavior in classification or regression on the data not
used for learning. While these assumptions definitely make sense, they
encode a certain optimism of how the grid of configurations is
populated: If we have too few configurations as input to the procedure
it might happen that some non-optimal configurations mask out the
other, normally optimal, configurations just by chance. To overcome
this problem we therefore would need a certain amount of redundancy in
the configuration grid. Both the amount of redundancy and thus the
similarity measure underlying this redundancy assumption are hard to
grasp theoretically, yet, it could lead to new ways to model the binary
transformation of the performance of configurations in each step of
the CVST algorithm.

There might be even further potential in the behavior of similar
configurations that could be used in the CVST algorithm: If there is a
notion of similarity between different configurations, it would be
interesting to exploit this information and incorporate it into the
CVST algorithm. For instance, one could add this kind of information
in the function {\sc topConfigurations} of
Algorithm~\ref{alg:complete} to average the result of similar
configurations and, hence, extend the pooling effect of the test
already available for the data point dimension in the direction of
configurations.

While the selection scheme explained in Section~\ref{sec:seqana} deals
with the fact of potential change points of a configuration, it is not
clear how independent the individual entries of a trace for a given
configuration are and how much these potential dependencies influence
the power of the sequential testing framework. Preliminary experiments
comparing the CVST algorithm as described in this paper and a version
of the CVST algorithm where at each step the data pool is shuffled,
thus, yielding always different data points for learning and
evaluation, showed no significant differences between these two
versions. This indicates that at least the potential dependencies
introduced by the overlap of learning sets due to subsequent addition
of data points do not interfere with the dependency assumption of
the sequential testing framework. We will see in the evaluation
section that the CVST procedure in its current form shows excellent
behavior throughout a wide range of data sets; yet, further research
of the theoretical properties of CVST might yield even better
procedures in the future.

\section{Experiments}
\label{sec:experiments}

Before we evaluate the CVST algorithm on real data, we investigate its
performance on controlled data sets. Both for regression and
classification tasks we introduce special tailored data sets to
highlight the overall behavior and to stress-test the fast
cross-validation procedure. To evaluate how the choice of learning
method influences the performance of the CVST algorithm, we compare
kernel logistic regression (KLR) against a $\nu$-Support Vector
Machine (SVM) for classification problems and kernel ridge regression
(KRR) versus $\nu$-SVR for regression problems each using a Gaussian
kernel \citep[see][]{Roth2001, Scholkopf:2000}. In all experiments we
use a 10 step CVST with parameter settings as described in
Section~\ref{sec:meta} (i.~e.~$\alpha = 0.05, \alpha_l = 0.01, \beta_l
= 0.1, \stoppingWindow = 3$) to give us an upper bound of the expected
speed gain. Note that we could get even higher speed gains by either
lowering the number of steps or increasing $\beta_l$. From a practical
point of view we believe that the settings studied are highly
realistic.

\subsection{Artificial Data Sets}
\label{sec:art}
To assess the quality of the CVST algorithm we first examine its
behavior in a controlled setting. We have seen in our motivation
section that a specific learning problem might have several layers of
structure which can only be revealed by the learner if enough data is
available. For instance in Figure~\ref{fig:ErrorExample} we can see
that the first optimal plateau occurs at $\sigma=0.1$, while the real
optimal parameter centers around $\sigma=0.01$. Thus, the real optimal
choice just becomes apparent if we have seen more than 200 data
points.

\JMLRdoublefigure{Example_NoisySine}{Example_NoisySinc}{.48}{The
  \emph{noisy sine} (left) and \emph{noisy sinc} data set (right).}

In this section we construct a learning problem both for regression
and classification tasks which could pose severe problems for the CVST
algorithm: If it stops too early, it will return a suboptimal
parameter set. We evaluate how different intrinsic dimensionalities
of the data and various noise levels affect the performance of the
procedure. For classification tasks we use the \emph{noisy sine} data
set, which consists of a sine uniformly sampled from a range
controlled by the intrinsic dimensionality $d$:
\[
y = \sin(x) + \epsilon \quad \text{ with } \epsilon \sim
\mathcal{N}(0, n^2), x \in [0, 2 \pi d], n \in \{0.25, 0.5\}, d \in
\{5, 50, 100\}.
\]
The labels of the sampled points are just the sign of $y$. An example
for $d=5, n=0.25$ is plotted in the left subplot of
Figure~\ref{fig:Example_NoisySine}. For regression tasks we devise the
\emph{noisy sinc} data set, which consists of a sinc function
overlayed with a high-frequency sine:
\[
y = \operatorname{sinc}(4 x) + \frac{\sin(15 d x)}{5} + \epsilon \text{ with }
\epsilon \sim \mathcal{N}(0, n^2), x \in [-\pi, \pi], n
\in \{0.1, 0.2\}, d \in \{2, 3, 4\}.
\]
An example for $d=2, n=0.1$ is plotted in the right subplot of
Figure~\ref{fig:Example_NoisySine}. For each of these data sets we
generate 1,000 data points and run a 10 step CVST and compare its
results with a normal 10-fold cross-validation on the full data
set. We record both the test error on additional 10,000 data points
and the time consumed for the parameter search. The explored parameter
grid contains 610 equally spaced parameter configurations for each
method ($\log_{10}(\sigma) \in \{-3, -2.9, \dots , 3\}$ and $\nu \in
\{0.05, 0.1, \dots, 0.5\}$ for SVM/SVR and $\log_{10}(\lambda) \in
\{-7, -6, \dots 2\}$ for KLR/KRR, respectively). This process is
repeated 50 times to gather sufficient data for an interpretation of
the overall process. Apart from recording the difference in mean
square error (MSE) of the learner selected by normal cross-validation
and by the CVST algorithm we also look at the relative speed
gain. Note that we have encoded the classes as $0$ and $1$ for the
classification experiments so the MSE corresponds to the
misclassification rate of the learner. So the difference in MSE gives
us a good measurement of the impact of using the CVST algorithm for
both classification and regression experiments.

\JMLRdoublefigure{MS_Toy_noisySine_Difference}{MS_Toy_noisySine_Ratio}{.48}{Difference
  in mean square error (left) and relative speed gain (right) for the
  \emph{noisy sine} data set.}

The results for the \emph{noisy sine} data set can be seen in
Figure~\ref{fig:MS_Toy_noisySine_Difference}. The left boxplots show the
distribution of the difference in MSE of the best
parameter determined by CVST and normal cross-validation. In the low
noise setting ($n = 0.25$) the CVST algorithm finds the same optimal
parameter as the normal cross-validation up to the intrinsic
dimensionality of $d=50$. For $d=100$ the CVST algorithm gets stuck in
a suboptimal parameter configuration yielding an increased
classification error compared to the normal cross-validation. This
tendency is slightly increased in the high noise setting ($n = 0.5$)
yielding a broader distribution. The classification method used seems
to have no direct influence on the difference, both SVM and KLR show
nearly similar behavior. This picture changes when we look at the
speed gains: While the SVM nearly always ranges between 10 and 20, the
KLR shows a speed-up between 20 and 70 times. The variance of the
speed gain is generally higher compared to the SVM which seems to be a
direct consequence of the inner workings of KLR: The main
loop performs at each step a matrix inversion of the whole kernel
matrix until the calculated coefficients converge. Obviously this
convergence criterion leads to a relative wide-spread distribution of
the speed gain when compared to the SVM performance.

\JMLRfigureP{MS_Toy_noisySine_Trace}{1}{Remaining configurations after
  each step for the \emph{noisy sine} data set.}

Figure~\ref{fig:MS_Toy_noisySine_Trace} shows the distribution of the
number of remaining configurations after each step of the CVST
algorithm. In the low noise setting (upper row) we can observe a
tendency of higher drop rates up to $d=100$. For the high noise
setting (lower row) we observe a steady increase of kept
configurations combined with a higher spread of the
distribution. Overall we see a very effective drop rate of
configurations for all settings. The SVM and the KLR show nearly similar
behavior so that the higher speed gain of the KLR we have seen before is a
consequence of the algorithm itself and is not influenced by
the CVST algorithm.

\JMLRdoublefigure{MS_Toy_noisySinc_Difference}{MS_Toy_noisySinc_Ratio}{.48}{Difference
  in mean square error (left plots) and relative speed gain (right
  plots) for the \emph{noisy sinc} data set.}

The performance on the \emph{noisy sinc} data set is shown in
Figure~\ref{fig:MS_Toy_noisySinc_Difference}. The first striking
observation is the transition of the CVST algorithm which can be
observed for the intrinsic dimensionality of $d=3$. At this point the
overall excellent performance of the CVST algorithm is on the verge of
choosing a suboptimal parameter configuration. This behavior is more
evident in the high noise setting. In the case of SVR the difference
to the solution found by the normal cross-validation is always smaller
than for KRR. The speed gain observed shows a small decline over the
different dimensionalities and noise levels and ranges between 10 and
20 for the SVR and 50 to 100 for KRR.

\JMLRfigureP{MS_Toy_noisySinc_Trace}{1}{Remaining configurations after
  each step for the \emph{noisy sinc} data set.}

This is a direct consequence of the behavior which can be observed in
the number of remaining configurations shown in
Figure~\ref{fig:MS_Toy_noisySinc_Trace}. Compared to the classification
experiments the drop is much more drastic. The intrinsic
dimensionality and the noise level show a small influence (higher
dimensionality or noise level yields more remaining configurations) but
the overall variance of the distribution is much smaller than in the
classification experiments.

\JMLRdoublefigure{MS_Toy_noisySine_DifferenceAdd}{MS_Toy_noisySinc_DifferenceAdd}{.48}{Difference
  in mean square error for SVM/SVR with increasing data set size for
  \emph{noisy sine} (left) and the \emph{noisy sinc} (right) data sets. By adding
  more data, the CVST algorithm converges to the correct parameter
  configuration.}

In Figure~\ref{fig:MS_Toy_noisySine_DifferenceAdd} we examine the
influence of more data on the performance of the CVST algorithm. Both
for the \emph{noisy sine} and \emph{noisy sinc} data set we are able
to estimate the correct parameter configuration for all noise and
dimensionality settings if we feed the CVST with enough
data.\footnote{Note that we have to limit this experiment to the
  SVM/SVR method, since the full cross-validation of the KLR/KRR would
  have taken too much time to compute.} Clearly, the CVST is capable of
extracting the right parameter configuration if we increase the
amount of data to $2000$ or $5000$ data points, rendering our method
even more suitable for big data scenarios: If data is abundant, CVST
will be able to estimate the correct parameter in a much smaller time
frame.

\subsection{Benchmark Data Sets}

After demonstrating the overall performance of the CVST algorithm on
controlled data sets we will investigate its performance on real life
and well known benchmark data sets. For classification we picked a
representative choice of data sets from the IDA benchmark repository
(see \citealt{ROM2001}\footnote{Available at
\url{http://www.mldata.org}.}). Furthermore we added the first two classes
with the most entries of the \emph{covertype} data set
\citep[see][]{blackard1999comparative}. Then we follow the procedure
of the paper in sampling 2,000 data points of each class for the model
learning and estimate the test error on the remaining data points. For
regression we pick the data used in \cite{donoho1994} and add the
\emph{bank32nm}, \emph{pumadyn32nm} and \emph{kin32nm} of the Delve
repository.\footnote{Available at \url{http://www.cs.toronto.edu/~delve}.}

We process each data set as follows: First we normalize each variable
of the data to zero mean and variance of one, and in case of
regression we also normalize the dependent variable. Then we split the
data set in half and use one part for training and the other for the
estimation of the test error. This process is repeated 50 times to get
sufficient statistics for the performance of the methods. As in the
artificial data setting we compare the speed gain of the fast compared
to the normal cross-validation on the same parameter grid of 610
values. To allow for better comparability of the performance on the
different data sets we report the mean square error (MSE) ratio of the
CVST procedure compared to the normal cross-validation, i.e., values
over 1.0 favor the CVST procedure. For the \emph{blocks},
\emph{bumps}, and \emph{doppler} data set of \cite{donoho1994} we
adjusted the range of $\sigma$ to a smaller scale ($\log_{10}(\sigma)
\in \{-6, -5.9, \dots , 0\}$) to have reasonable results in the
parameter grid of 610 values since these data sets contain a very
fine-grained structure. Note that this adjustment is just for the sake
of comparability to the other data sets.

\JMLRdoublefigure{Benchmark_Diffratio}{MS_Benchmark_Ratio}{1}{MSE ratio (upper plots) and relative speed gain (lower
  plots) for the \emph{benchmark} data sets.}

\JMLRdoublefigure{MS_Benchmark_TraceC}{MS_Benchmark_TraceR}{.9}{Remaining
  configurations after each step for different \emph{benchmark} data sets.}

\begin{table}[bt]
\centering
\footnotesize
\begin{tabular}{rrrr|rrrr}
  \toprule
\bf Data & \bf Method & \bf MSE Ratio & \bf Speed & \bf Data & \bf Method & \bf MSE Ratio & \bf Speed \\ 
  \midrule
  banana & KLR & $ \mathbf{ 0.998 \pm 0.012 }$ & $ 61.1 $ &   bank & KRR & $ 0.994 \pm 0.003 $ & $ 85.7 $ \\       
  banana & SVM & $ \mathbf{ 1.001 \pm 0.008 }$ & $ 16.9 $ &   bank & SVR & $ 0.998 \pm 0.001 $ & $ 43.9 $ \\       
  covtype & KLR & $ \mathbf{ 0.994 \pm 0.011 }$ & $ 84.3 $ &  blocks & KRR & $ 0.762 \pm 0.022 $ & $ 81.3 $ \\     
  covtype & SVM & $ 0.939 \pm 0.009 $ & $ 53.4 $ &            blocks & SVR & $ 0.886 \pm 0.016 $ & $ 39.5 $ \\     
  german & KLR & $ \mathbf{ 0.987 \pm 0.017 }$ & $ 27.5 $ &   bumps & KRR & $ 0.784 \pm 0.043 $ & $ 86.1 $ \\      
  german & SVM & $ \mathbf{ 0.981 \pm 0.024 }$ & $ 5.5 $ &    bumps & SVR & $ 0.666 \pm 0.030 $ & $ 37.3 $ \\      
  image & KLR & $ \mathbf{ 1.032 \pm 0.051 }$ & $ 33.5 $ &    doppler & KRR & $ 0.766 \pm 0.035 $ & $ 92.4 $ \\    
  image & SVM & $ 0.923 \pm 0.060 $ & $ 13.8 $ &              doppler & SVR & $ 0.937 \pm 0.014 $ & $ 41.2 $ \\    
  ringnorm & KLR & $ 0.814 \pm 0.050 $ & $ 111.7 $ &          heavisine & KRR & $ 0.981 \pm 0.005 $ & $ 53.2 $ \\  
  ringnorm & SVM & $ \mathbf{ 0.999 \pm 0.017 }$ & $ 32.4 $ & heavisine & SVR & $ 0.988 \pm 0.003 $ & $ 33.7 $ \\  
  splice & KLR & $ \mathbf{ 0.991 \pm 0.019 }$ & $ 52.2 $ &   kin & KRR & $ 0.994 \pm 0.002 $ & $ 58.7 $ \\        
  splice & SVM & $ \mathbf{ 1.005 \pm 0.016 }$ & $ 18.1 $ &   kin & SVR & $ 0.996 \pm 0.001 $ & $ 39.9 $ \\        
  twonorm & KLR & $ \mathbf{ 1.017 \pm 0.034 }$ & $ 50.1 $ &  pumadyn & KRR & $ 0.992 \pm 0.003 $ & $ 68.2 $ \\    
  twonorm & SVM & $ \mathbf{ 1.015 \pm 0.014 }$ & $ 25.7 $ &  pumadyn & SVR & $ 0.984 \pm 0.007 $ & $ 29.6 $ \\    
  waveform & KLR & $ \mathbf{ 0.989 \pm 0.014 }$ & $ 54.0 $ \\ 
  waveform & SVM & $ \mathbf{ 0.992 \pm 0.013 }$ & $ 22.8 $ \\ 
  \bottomrule
\end{tabular}
\caption{Comparison of performance of the CVST algorithm to full cross-validation (classification data sets in left part, regression data sets in right part). MSE ratio is the relative gain in MSE of CVST compared to the full cross-validation. Speed denotes the relative speed increase of CVST compared to the full cross-validation. We report the
  mean values over 50 repetitions and 1.96 standard errors. If CVST performs on par or better than the full cross-validation (i.e., MSE ratio plus 1.96 standard errors is bigger than 1.0) the values are in boldface.}
\label{tab:bench}
\end{table}

Figure~\ref{fig:Benchmark_Diffratio} shows the result for the
classification data sets (left side) and the regression data sets
(right side). The upper panels depict the relative gain in MSE of CVST
compared to the full cross-validation. For the classification tasks we
see that CVST is on par with the full cross-validation except for the
SVM for \emph{covtype} and KLR for \emph{ringnorm}. For the regression
task we observe that except for the \emph{blocks}, \emph{bumps} and
\emph{doppler} data sets CVST chooses reasonable parameter set.
Although for some problems the CVST algorithm picks a suboptimal
parameter set, even then the relative performance decreases are always
relatively small and range around 80\%. The learners have hardly any
impact on the behavior; just for the \emph{ringnorm} and the
\emph{blocks}, \emph{bumps} and \emph{doppler} data set we see a
strong difference of the corresponding methods. These findings can
also be observed in Table~\ref{tab:bench}: Given the mean of the
relative MSE ratio and the speed ratio with its corresponding 1.96
standard errors we mark entries in boldface where the MSE ratio plus
the 1.96 standard error is bigger than 1.0 indicating a performance on
par or better than the normal cross-validation. While CVST can tackle
most of the classification task we see a relative decline in the
regression tasks. But except for the \emph{blocks}, \emph{bumps} and
\emph{doppler} data sets the relative performance decrease ranges
around 99\% indicating a nearly optimal performance. 

In terms of speed gain we see a much more diverse and varying
picture. Overall, the speed improvements for KLR and KRR are higher
than for SVM and SVR and reach up to 120 times compared to normal
cross-validation. Regression tasks in general seem to be solved faster
than classification tasks, which can clearly be explained when we look
at the traces in Figure~\ref{fig:MS_Benchmark_TraceC}: For
classification tasks the number of kept configurations is generally
much higher than for the regression tasks. Furthermore we can observe
several types of difficulty of the learning problems. For instance the
\emph{german} data set seems to be much more difficult than the
\emph{ringnorm} data \citep[see][]{braun2008} which is also reflected
in the difference and speed improvement seen in the previous
figure. We will see in Section~\ref{sec:friedman} that we can trade
time for increasing the accuracy of CVST for the regression tasks
by leveraging the modular construction of the CVST procedure.

Since finding the top configurations inside the loop of the CVST
algorithm is a crucial step to the overall performance of the
procedure we further investigate how our choice of tests influence the
performance of the CVST procedure. Recall from Algorithm~\ref{alg:topConfigurations}
that we used the Cochran's Q test for classification experiments (see
line~\ref{alg:topConfigurationsClass}) and Friedman test for
regression problems (see line~\ref{alg:topConfigurationsReg}) to find
the top configurations in an iterative testing scheme. Both these test
are non-parametric and paired tests, i.e., they both take into account
the pointwise performance of a configuration and, thus, compare the
performance on individual data points. In Section~\ref{sec:transform}
we argued that this pooling effect robustifies the estimation of the
top configurations since outliers in the testing data do not have such
a dramatic effect on the test results compared to using for instance
the overall test error as input for the test. To verify this claim we
have replaced the tests in the Algorithm~\ref{alg:topConfigurations}
by unpaired, non-parametric versions which solely test whether the
test error is significantly different without taking the results of
the individual data points into account. To this end we have replaced
the Cochran's Q test for classification experiments on 
line~\ref{alg:topConfigurationsClass} of
Algorithm~\ref{alg:topConfigurations} by an unpaired version described
by \cite{wilson27} and the Friedman test by the Kruskal-Wallis rank
sum test \citep{Kruskal52}. Again we repeat the procedure for each
benchmark set 50 times to get reliable statistics.

The results are reported in Table~\ref{tab:all}, upper part. We can see that
the relative level of MSE is almost always around 1.0 if we take the 1.96 standard
error ranges into account. The upper plot of
Figure~\ref{fig:Benchmark_UnpairedSim_Diffratio} shows the
distribution of the MSE ratio. Except for the \emph{bumps}
and \emph{doppler} data set all distributions are clearly centered
around 1.0 with a narrow spread which further highlights the equality
of the two methods in terms of accuracy.

Comparing the ratio of the CVST procedure to the unpaired variant we
can see that the paired test variant improves the runtime of the
procedure significantly. This clearly demonstrates that the usage of
the paired tests which directly estimate the top configurations on the
pointwise predictions saves computation time with no impact on the
accuracy.

Since we compared the CVST algorithm to a full cross-validation it is
also of interest to see how CVST compares to a simple heuristic which
uses just 10\% of the data for the cross-validation. We have executed
this experiment for all benchmark data sets and repeated the procedure
50 times to get statistically sound estimates. The middle part of
Table~\ref{tab:all} reports the MSE ratio of the CVST
compared to the 10\% cross-validation annotated with their
corresponding 1.96 standard errors and again the speed ratio. The
first striking thing to observe is that the MSE ratios are all
significantly bigger than 1.0, indicating that CVST always finds a
better performing configuration than the simple 10\% heuristic. While
the accuracy impact varies across the different data sets CVST 
always picks significantly better performing configurations with a
modest impact on the runtime compared to the simple 10\%
heuristic. This trend can clearly be seen in the middle plot of
Figure~\ref{fig:Benchmark_UnpairedSim_Diffratio} which shows the
corresponding distributions of the relative level of MSE. For all data sets the
bulk of the distribution is above 1.0 indicating better performance
of the CVST method compared to the 10\% heuristic.

The last comparison of the performance of the CVST method is shown in
the lower part of Table~\ref{tab:all}. Here we show the 
MSE ratio of CVST compared to a random search as described in
\cite{Bergstra12}. In each step of the random search procedure we choose
parameters uniformly distributed over the range of the corresponding
grid of the CVST procedure and learn a full-data model. After having
spent the same amount of time as the CVST procedure on a specific data
set we stop the random search. Then we pick the best model of the so far
evaluated parameters and compare its performance to the CVST
model. Again, the lower part of Table~\ref{tab:all} shows the relative
gain in MSE and their corresponding 1.96 standard errors gathered over
50 repetitions for each data set. Both the table and the lower part of
Figure~\ref{fig:Benchmark_UnpairedSim_Diffratio} indicate that CVST
shows better performance compared to random search especially in the
regression data sets. In some cases (\emph{covtype} with SVM,
\emph{ringnorm} with SVM and \emph{splice} with SVM) the random search
outperforms CVST but in the majority of cases the CVST procedure can
extract better parameter configurations in the same amount of time
than the random search.

\begin{table}[p]
\centering
\footnotesize
\begin{tabular}{rrrr|rrrr}
  \toprule
\bf Data & \bf Method & \bf MSE Ratio & \bf Speed & \bf Data & \bf Method & \bf MSE Ratio & \bf Speed \\ 
% unpaired
  \midrule
banana & KLR & $ 1.001 \pm 0.010 $ & $ 2.9 $ &      bank & KRR & $ 0.984 \pm 0.005 $ & $ 3.0 $ \\        
  banana & SVM & $ 1.005 \pm 0.009 $ & $ 1.6 $ &    bank & SVR & $ 1.000 \pm 0.001 $ & $ 1.2 $ \\     
  covtype & KLR & $ 1.003 \pm 0.005 $ & $ 1.6 $ &   blocks & KRR & $ 0.957 \pm 0.030 $ & $ 1.4 $ \\   
  covtype & SVM & $ 0.992 \pm 0.007 $ & $ 1.2 $ &   blocks & SVR & $ 0.986 \pm 0.006 $ & $ 1.4 $ \\   
  german & KLR & $ 0.993 \pm 0.013 $ & $ 0.9 $ &    bumps & KRR & $ 0.920 \pm 0.074 $ & $ 1.0 $ \\    
  german & SVM & $ 0.984 \pm 0.019 $ & $ 0.6 $ &    bumps & SVR & $ 0.999 \pm 0.003 $ & $ 1.1 $ \\    
  image & KLR & $ 1.011 \pm 0.040 $ & $ 3.3 $ &     doppler & KRR & $ 0.954 \pm 0.051 $ & $ 1.3 $ \\  
  image & SVM & $ 1.030 \pm 0.047 $ & $ 1.5 $ &     doppler & SVR & $ 0.981 \pm 0.010 $ & $ 1.1 $ \\  
  ringnorm & KLR & $ 0.963 \pm 0.037 $ & $ 1.1 $ &  heavisine & KRR & $ 0.990 \pm 0.005 $ & $ 2.6 $ \\
  ringnorm & SVM & $ 1.005 \pm 0.013 $ & $ 1.0 $ &  heavisine & SVR & $ 0.995 \pm 0.003 $ & $ 6.9 $ \\
  splice & KLR & $ 0.992 \pm 0.020 $ & $ 1.6 $ &    kin & KRR & $ 1.000 \pm 0.003 $ & $ 6.8 $ \\      
  splice & SVM & $ 1.012 \pm 0.014 $ & $ 1.1 $ &    kin & SVR & $ 1.000 \pm 0.000 $ & $ 3.8 $ \\      
  twonorm & KLR & $ 0.988 \pm 0.029 $ & $ 0.9 $ &   pumadyn & KRR & $ 1.003 \pm 0.003 $ & $ 26.0 $ \\ 
  twonorm & SVM & $ 1.001 \pm 0.010 $ & $ 0.9 $ &   pumadyn & SVR & $ 1.000 \pm 0.001 $ & $ 18.4 $ \\ 
  waveform & KLR & $ 0.997 \pm 0.013 $ & $ 1.5 $ & \multicolumn{4}{c}{\multirow{2}{*}{\normalsize \it Unpaired Version of CVST} } \\
  waveform & SVM & $ 1.011 \pm 0.012 $ & $ 1.3 $ & \\
% tenpercent
  \midrule
banana & KLR & $ \mathbf{ 1.056 \pm 0.036 }$ & $ 0.7 $ &     bank & KRR & $ \mathbf{ 1.073 \pm 0.024 }$ & $ 0.6 $ \\   
  banana & SVM & $ \mathbf{ 1.106 \pm 0.086 }$ & $ 0.3 $ &   bank & SVR & $ \mathbf{ 1.006 \pm 0.003 }$ & $ 0.7 $ \\   
  covtype & KLR & $ \mathbf{ 1.062 \pm 0.028 }$ & $ 0.9 $ &  blocks & KRR & $ \mathbf{ 1.428 \pm 0.102 }$ & $ 0.7 $ \\ 
  covtype & SVM & $ \mathbf{ 1.031 \pm 0.020 }$ & $ 0.4 $ &  blocks & SVR & $ \mathbf{ 1.189 \pm 0.033 }$ & $ 0.7 $ \\ 
  german & KLR & $ \mathbf{ 1.085 \pm 0.031 }$ & $ 1.5 $ &   bumps & KRR & $ \mathbf{ 1.947 \pm 0.380 }$ & $ 0.7 $ \\  
  german & SVM & $ \mathbf{ 1.132 \pm 0.061 }$ & $ 0.6 $ &   bumps & SVR & $ \mathbf{ 1.049 \pm 0.015 }$ & $ 0.7 $ \\  
  image & KLR & $ \mathbf{ 1.544 \pm 0.185 }$ & $ 0.4 $ &    doppler & KRR & $ \mathbf{ 1.896 \pm 0.186 }$ & $ 0.7 $ \\
  image & SVM & $ \mathbf{ 1.279 \pm 0.143 }$ & $ 0.8 $ &    doppler & SVR & $ \mathbf{ 1.210 \pm 0.035 }$ & $ 0.7 $ \\
  ringnorm & KLR & $ \mathbf{ 2.083 \pm 0.291 }$ & $ 1.3 $ & heavisine & KRR & $ \mathbf{ 1.104 \pm 0.022 }$ & $ 0.4 $ \\
  ringnorm & SVM & $ \mathbf{ 1.044 \pm 0.038 }$ & $ 0.5 $ & heavisine & SVR & $ \mathbf{ 1.042 \pm 0.011 }$ & $ 0.5 $ \\
  splice & KLR & $ \mathbf{ 1.106 \pm 0.067 }$ & $ 0.6 $ &   kin & KRR & $ \mathbf{ 1.074 \pm 0.030 }$ & $ 0.3 $ \\    
  splice & SVM & $ \mathbf{ 1.092 \pm 0.039 }$ & $ 0.6 $ &   kin & SVR & $ \mathbf{ 1.014 \pm 0.006 }$ & $ 0.7 $ \\    
  twonorm & KLR & $ \mathbf{ 1.197 \pm 0.103 }$ & $ 0.7 $ &  pumadyn & KRR & $ \mathbf{ 1.053 \pm 0.016 }$ & $ 0.4 $ \\
  twonorm & SVM & $ \mathbf{ 1.032 \pm 0.026 }$ & $ 0.4 $ &  pumadyn & SVR & $ \mathbf{ 1.026 \pm 0.007 }$ & $ 0.4 $ \\
  waveform & KLR & $ \mathbf{ 1.099 \pm 0.036 }$ & $ 0.7 $ & \multicolumn{4}{c}{\multirow{2}{*}{\normalsize \it Cross-Validation on 10\% of Data} } \\
  waveform & SVM & $ \mathbf{ 1.088 \pm 0.063 }$ & $ 0.4 $ & \\
% random search
  \midrule
banana & KLR & $ \mathbf{ 1.198 \pm 0.131 }$ & $ 1.0 $ &        bank & KRR & $ \mathbf{ 1.339 \pm 0.068 }$ & $ 1.0 $ \\    
  banana & SVM & $ 0.970 \pm 0.012 $ & $ 1.0 $ &                bank & SVR & $ \mathbf{ 3.070 \pm 0.376 }$ & $ 1.0 $ \\    
  covtype & KLR & $ \mathbf{ 1.185 \pm 0.044 }$ & $ 1.0 $ &     blocks & KRR & $ \mathbf{ 2.059 \pm 0.529 }$ & $ 1.0 $ \\  
  covtype & SVM & $ 0.886 \pm 0.032 $ & $ 1.0 $ &               blocks & SVR & $ \mathbf{ 1.526 \pm 0.314 }$ & $ 1.0 $ \\  
  german & KLR & $ \mathbf{ 1.128 \pm 0.098 }$ & $ 1.0 $ &      bumps & KRR & $ \mathbf{ 2.567 \pm 0.629 }$ & $ 1.0 $ \\   
  german & SVM & $ 1.020 \pm 0.024 $ & $ 1.0 $ &                bumps & SVR & $ \mathbf{ 4.461 \pm 0.511 }$ & $ 1.0 $ \\   
  image & KLR & $ \mathbf{ 2.188 \pm 0.512 }$ & $ 1.0 $ &       doppler & KRR & $ \mathbf{ 1.891 \pm 0.405 }$ & $ 1.0 $ \\ 
  image & SVM & $ \mathbf{ 1.133 \pm 0.064 }$ & $ 1.0 $ &       doppler & SVR & $ \mathbf{ 2.337 \pm 0.583 }$ & $ 1.0 $ \\ 
  ringnorm & KLR & $ \mathbf{ 5.972 \pm 2.315 }$ & $ 1.0 $ &    heavisine & KRR & $ \mathbf{ 1.075 \pm 0.012 }$ & $ 1.0 $ \\
  ringnorm & SVM & $ 0.648 \pm 0.094 $ & $ 1.0 $ &              heavisine & SVR & $ 1.012 \pm 0.043 $ & $ 1.0 $ \\         
  splice & KLR & $ \mathbf{ 2.062 \pm 0.389 }$ & $ 1.0 $ &      kin & KRR & $ \mathbf{ 1.283 \pm 0.049 }$ & $ 1.0 $ \\     
  splice & SVM & $ 0.874 \pm 0.022 $ & $ 1.0 $ &                kin & SVR & $ \mathbf{ 1.190 \pm 0.033 }$ & $ 1.0 $ \\     
  twonorm & KLR & $ 2.427 \pm 1.444 $ & $ 1.0 $ &               pumadyn & KRR & $ \mathbf{ 1.113 \pm 0.026 }$ & $ 1.0 $ \\ 
  twonorm & SVM & $ 0.939 \pm 0.027 $ & $ 1.0 $ &               pumadyn & SVR & $ \mathbf{ 1.070 \pm 0.025 }$ & $ 1.0 $ \\ 
  waveform & KLR & $ \mathbf{ 2.032 \pm 0.600 }$ & $ 1.0 $ & \multicolumn{4}{c}{\multirow{2}{*}{\normalsize \it Random Search} } \\
  waveform & SVM & $ 0.959 \pm 0.019 $ & $ 1.0 $ \\ 
  \bottomrule
\end{tabular}
\vspace{-.25cm}
\caption{Comparison of performance of the CVST algorithm to different competitors (details
  see text). MSE ratio is the relative gain in MSE of CVST compared to the other variant. Speed denotes the relative speed increase of CVST compared to the other variant. We report the
  mean values over 50 repetitions and 1.96 standard errors with
  significant better values of CVST in boldface.}
\label{tab:all}
\end{table}

\JMLRtripplefigure{Benchmark_UnpairedSim_Diffratio}{Benchmark_Tenpercent_Diffratio}{Benchmark_Random_Search_10_Diffratio}{1.0}{\vspace{-.5cm}Distributions
  of the MSE ratio of the CVST procedure compared to the
  unpaired variant (upper panel), the cross-validation on 10\% of the
  data (middle panel) and random search (lower panel). The horizontal line denotes the $1.0$, i.e., equal performance ratio. Values over 1.0 favor the CVST procedure.}

In summary, the evaluation of the benchmark data sets shows that the
CVST algorithm gives a huge speed improvement compared to the normal
cross-validation. While we see some non-optimal choices of
configurations, the total impact on the error is never exceptionally
high. We have to keep in mind that we have chosen the parameters of
our CVST algorithm to give an impression of the maximal attainable
speed-up: More conservative settings would trade computational time
for lowering the impact on the test error. The CVST outperforms both
unpaired variants of the procedure, the simple heuristics of
cross-validation on just 10\% of the data, and a random search in
parameter space. This clearly demonstrates that the individual parts
of the CVST procedure are well chosen and the combination of tests are
superior to other methods. Trading some speed compared to simpler
heuristics for more robust and stable estimates of optimal performing
configurations and the huge speed improvement compared to a full
cross-validation renders the CVST procedure as a promising candidate
for model selection in big data settings.

\section{Modularization and Extensions}
\label{sec:related}

\JMLRfigure{modularizedCVST}{.8}{Conceptual view of the CVST algorithm.
  Each execution of the loop body consists of a sequence of test, each
  delivering the input for the following test. This modular structure
  allows for customization of the CVST algorithm to special situations
  (multi-class experiments, structured learning etc.).\vspace{-.3cm}}

In this Section we will deal with several aspects of the CVST
algorithm: We illuminate the inner structure of the overall procedure
and discuss potential extensions and properties of specific steps.
The CVST algorithm consists of a sequence of tightly coupled
modules: The output of the top or flop test is the input for the
subsequent test for significant losers. The performance history of all
remaining configurations is then the input for the early stopping rule
which looks for similar performance of the remaining configurations on
the learning problem to capture the right point in time to stop the
CVST loop. This stepwise procedure is depicted in
Figure~\ref{fig:modularizedCVST}: While the tests for top or flop
configurations (step~\ding{202}) and the following sequential analysis
(step~\ding{203}) focuses solely on the individual configurations, the
early stopping rule (step~\ding{204}) acts on a global scope by
determining the right point to stop the CVST algorithm. Thus, we face
two kinds of test, namely the configuration-specific and the
problem-specific tests.

To complete our discussion of the CVST algorithm, we focus on the
configuration-specific procedures. First, we analyze the inner
structure of the similarity test based on the error landscape in
Section~\ref{sec:friedman} and how this module can be 
adjusted for specific side constraints. Furthermore, in
Section~\ref{sec:seqtest} we look at the suitability of the
sequential analysis for determining significant loser configurations.
It is shown that a so-called \emph{closed} sequential test lacks
essential properties of the open variant of Wald used in the CVST
algorithm, which further underlines the appropriateness of the open test of
Wald for the learning on increasing subsets of data.

\subsection{Checking the Similarity of the Error Landscape}
\label{sec:friedman}

In the evaluation of the CVST method in Section~\ref{sec:experiments}
we see that the Friedman test for the regression case shows a much
more aggressive behavior than the Cochran's Q test used in the
top or flop conversion in the classification case. This feature can be
clearly seen in Figure~\ref{fig:MS_Benchmark_TraceC} where the dropping
rates of the classification and regression benchmark data sets can be
easily compared. Since the Friedman test acts on the squared residuals
it uses more information compared to the classification task where we
just have the information whether a specific data point was correctly
classified or not. Thus, the Friedman test can exploit the higher
detail of the information and can decide much faster than the Cochran's Q test
which of the configurations are significantly different from the top
performing ones.

In this section we show how the modular design of the CVST algorithm
can be utilized to fit a less aggressive, yet more robust similarity
test for regression data into the overall framework. It comes as no
surprise that this increased tolerance affects the runtime of the CVST
procedure. In the following we will first develop the alternative
similarity test and then compare its performance both on the toy and
the benchmark data sets to the original Friedman variant.

Recall from Section~\ref{sec:transform} that the top or flop assignment
was calculated in a sequential manner: First we order all remaining
configurations according to their mean performance; then we check at
which point the addition of another configuration shows a
significantly different behavior compared to all other, better
performing configurations. To employ a less strict version of the
Friedman test we drop the actual residual information and instead use
the outlier behavior of a configuration for comparison. To this end
we assume that the residuals are normally distributed with
mean zero and a configuration-dependent variance $\sigma_c^2$ which we
estimate from the actual residuals. Now we can check for each
calculated residual whether it exceeds the $\frac{\alpha}{2}$
confidence interval around zero by using the normality
assumption, thus converting the raw residuals in a binary information
whether it is deemed as an outlier or not. Similar to the
classification case this binary matrix forms the input to the
Cochran's Q test which then asserts whether a specific configuration
belongs to the top-performing ones or not.

The results of this procedure on the \emph{noisy sinc} data set is
shown in Figure~\ref{fig:paramQM_Toy_noisySinc_Difference}: Compared
to the outcome of the Friedman test in
Figure~\ref{fig:MS_Toy_noisySinc_Difference} we can clearly see that
the conservative nature of the outlier-based test helps in finding the
correct parameter configuration. Obviously its higher retention rate
leads to lower runtime performance: The speed ratio drops roughly by a
factor of $\frac{2}{3}$. A similar behavior can be observed on the
\emph{benchmark} data sets in
Figure~\ref{fig:paramQM_Benchmark_Diffratio}: The conservative
behavior of the outlier-based measure increases the  
MSE ratio compared to the residual-based test, but at the same time lowers the
speed ratio. Interestingly, for the \emph{benchmark} data sets the
speed impact on the SVR is much lower compared to the speed ratio
decrease of the KRR method. We can observe this shift also in the
number of kept configurations shown in
Figure~\ref{fig:paramQM_Toy_noisySinc_Trace} both for the \emph{noisy
  sinc} and the \emph{benchmark} data sets.

The conclusion of this discussion is two-fold: First, this section
shows how the modular construction of the CVST methods allows for the
exchange of the individual parts of the algorithm without disrupting
the workflow of the procedure. If the residual-based test turns out to
be unsuitable for a given regression problem, it is extremely easy to
devise an adapted version for instance by looking at the outlier
behavior of the configurations. Second, we see the inherent
flexibility of the CVST algorithm. If there is a need for different
error measures (for instance multi-class experiments, structured
learning etc.), the modularized structure of the CVST algorithms allows for
maximal flexibility and adaptability to special cases. 

\JMLRdoublefigure{paramQM_Toy_noisySinc_Difference}{paramQM_Toy_noisySinc_Ratio}{.48}{Difference
  in mean square error (left plots) and relative speed gain (right
  plots) for the \emph{noisy sinc} data set using the outlier-based
  similarity test. In comparison to the stricter Friedman test used in
  Figure~\ref{fig:MS_Toy_noisySinc_Difference} we can observer a more
  conservative behavior resulting in increased robustness at the
  expense of performance.}

\JMLRdoublefigure{paramQM_Benchmark_Diffratio}{paramQM_Benchmark_Ratio}{.48}{MSE ratio (left plot) and relative speed gain (right
  plot) for the \emph{benchmark} data sets using the outlier-based
  similarity test. Compared to Figure~\ref{fig:Benchmark_Diffratio}
  we can see better accuracy but decreased speed performance.}

\JMLRdoublefigure{paramQM_Toy_noisySinc_Trace}{paramQM_Benchmark_TraceR}{.9}{Remaining
  configurations after each step for the \emph{noisy sinc} and
  different \emph{benchmark} data sets using the outlier-based
  similarity test. Compared to Figure~\ref{fig:MS_Toy_noisySinc_Trace}
  and Figure~\ref{fig:MS_Benchmark_TraceC} we can clearly observe the
  higher retention rate of this more conservative test.}

\subsection{Determining Significant Losers: Open versus Closed Sequential Testing}
\label{sec:seqtest}
As already introduced in Section~\ref{sec:seqana} the sequential
testing was pioneered by \cite{wald47}; the test monitors a likelihood
ratio of a sequence of i.i.d. Bernoulli variables $B_1, B_2, \dots$:
\[
\ell = \prod_{i=1}^{n}f(b_i, \pi_1) /
\prod_{i=1}^{n}f(b_i, \pi_0)\quad\text{given } H_h: B_i \sim \pi_h, h \in \{0, 1\}.
\] 
Hypothesis $H_1$ is accepted if $\ell \geq A$ and contrary $H_0$ is
accepted if $\ell \leq B$. If neither of these conditions apply, the
procedure cannot accept either of the two hypotheses and needs more
data. $A$ and $B$ are chosen such that the error probability of the
two decisions does not exceed $\alpha_l$ and $\beta_l$
respectively. In \cite{wald48} it is proven that the open sequential
probability ratio test of Wald is optimal in the sense that compared
to all tests with the same power it requires on average fewest
observations for a decision. The testing scheme of Wald is called
\emph{open} since the procedure could potentially go on forever, as
long as $\ell$ does not leave the $(A,B)$-tunnel.

The open design of Wald's procedure led to a development of a
different kind of sequential tests, where the number of observations
is fixed beforehand
\citep[see][]{armitage60,spicer62,alling66,mcpherson71}. For instance
in clinical studies it might be impossible or ethically prohibitive to
use a test which potentially could go on forever. Unfortunately, none
of these so-called closed tests exhibit an optimality criterion,
therefore we choose one which at least in simulation studies showed
the best behavior in terms of average sample number statistics: The
method of \cite{spicer62} is based on a gambler's ruin scenario in
which both players have a fixed fortune and decide to play for $n$
games. If $f(n, \pi, F_a, F_b)$ is the probability that a player with
fortune $F_a$ and stake $b$ will ruin his opponent with fortune $F_b$
in exactly $n$ games, then the following recurrence holds:
\begin{equation*}
f(n, \pi, F_a, F_b) =
\begin{cases}
  0 \quad\text{if } F_a < 0 \vee (n = 0 \wedge F_b > 0),\\
  1 \quad\text{if } n = 0 \wedge F_a > 0 \wedge F_b \leq 0,\\
  \pi f(n-1, \pi, F_a+1, F_b-b) \\
  \qquad + (1-\pi) f(n-1, \pi, F_a-b, F_b+b) \quad \text{otherwise.}\\  
\end{cases}
\end{equation*}
In each step, the player can either win a game with probability $\pi$ and
win 1 from his opponent or lose the stake $b$ to the other
player. Now, given $n = x + y$ games of which player $A$ has won $y$
and player $B$ has won $x$, the game will stop if either of the
following conditions hold:
\[
y - bx = - F_a \Leftrightarrow y = \frac{b}{1+b}n -
\frac{F_a}{1+b} \quad \text{or} \quad y - bx = F_b \Leftrightarrow y = \frac{b}{1+b}n -
\frac{F_b}{1+b}.
\]
This formulation casts the gambler's ruin problem into a Wald-like
scheme, where we just observe the cumulative wins of player $A$ and
check whether we reached the lower or upper line. If we now choose
$F_a$ and $F_b$ such that $f(n, 0.5, F_a, F_b) \leq \alpha_l$, we
construct a test which allows us to check whether a given
configuration performs worse than $\pi = 0.5$ (i.e., crosses the lower
line) and can therefore be flagged as an overall loser with controlled
error probability of $\alpha_l$ (see \citealt{alling66}). For more details
on the closed design of Spicer please consult \cite{spicer62}.

Since simulation studies show that the closed variants of the
sequential testing exhibit low average sample number statistics, we
first have a look at the runtime performance of the CVST algorithm
equipped with either the open or the closed sequential test. The most
influential parameter in terms of runtime is the $\steps$
parameter. In principle, a larger number of steps leads to more robust
estimates, but also to an increase of computation time. We study the
effect of different choices of this parameter in a simulation. For the
sake of simplicity we assume that the binary top or flop scheme
consists of independent Bernoulli variables with $\pi_\text{winner}
\in [0.9, 1.0]$ and $\pi_\text{loser} \in [0.0, 0.1]$. We test both
the open and the closed sequential test and compare the relative
speed-up of the CVST algorithm compared to a full 10-fold
cross-validation in case the learner is cubic.

\JMLRfigure{speedSimulationsResults}{1}{Relative speed gain of fast
  cross-validation compared to full cross-validation. We assume that
  training time is cubic in the number of samples. Shown are runtimes
  for 10-fold cross-validation on different problem classes by
  different loser/winner ratios (easy: 3:1; medium: 1:1, hard: 1:3)
  over 200 resamples.}

Figure~\ref{fig:speedSimulationsResults} shows the resulting simulated
runtimes for different settings. The overall speed-up is much higher
for the closed sequential test indicating a more aggressive behavior
compared to the more conservative open alternative. Both tests show
their highest increase in the range of 10 to 20 steps with a rapid
decline towards the higher step numbers. So in terms of speed the closed
sequential test definitely beats the more conservative open
test. 

\JMLRfigure{falseNegativeSimulationsResultsClosed}{1}{False negatives
  generated with the closed sequential test for non-stationary
  configurations, i.e., at the given change point the Bernoulli
  variable changes its $\pi_\text{before}$ from the indicated value to
  1.0.}

To evaluate the false negatives of the closed sequential test we
simulate switching configurations by independent Bernoulli variables
which change their success probability $\pi$ from a chosen
$\pi_\text{before} \in \{0.1, 0.2, \dots, 0.5\}$ to a constant $1.0$
at a given change point. By using this setup we mimic the behavior of
a switching configuration which starts out as a loser and after enough
data is available turns into a constant winner. The results can be
seen in Figure~\ref{fig:falseNegativeSimulationsResultsClosed} which
reveals that the speed gain comes at a price: Apart from having no
control over the safety zone, the number of falsely dropped
configurations is much higher than for the open sequential test (see
Figure~\ref{fig:falseNegativeSimulationsResultsOpen} in
Section~\ref{sec:fnr}). While having a definitive advantage over
the open test in terms of speed, the false negative rate of the closed
test renders it useless for the CVST algorithm.

\section{Conclusion}
\label{sec:conclusion}

We presented a method to speed up the cross-validation procedure by
starting at subsets of the full training set size, identifying
clearly underperforming parameter configurations early on and focusing on
the most promising candidates for the larger subset sizes. We have
discussed that taking subsets of the data set has theoretical
advantages when compared to other heuristics like local search on the
parameter set because the effects on the test errors are systematic
and can be understood statistically. On the one hand, we argued that
the optimal configurations converge to the true ones as sample sizes
tend to infinity, but we also discussed in a concrete setting how the
different behaviors of estimation error and approximation error lead
to much faster convergence practically. These insights led to the
introduction of a safety zone through sequential testing, which
ensures that underperforming configurations are not removed
prematurely when the minima are not converged yet. In experiments we
showed that our procedure leads to a speed-up of up to 120 times
compared to the full cross-validation without a significant increase
in prediction error.

It will be interesting to combine this method with other
procedures like the Hoeff\-ding races or algorithms for multi-armed
bandit problems. Furthermore, getting accurate convergence bounds even
for finite sample size settings is another topic for future
research.

\acks{We would like to acknowledge support for this project from the
  BMBF project ALICE, ``Autonomous Learning in Complex Environments''
  (01IB10003B). We would like to thank Klaus-Robert Müller, Marius
  Kloft, Raphael Pelossof and Ralf Herbrich for fruitful discussions
  and help. We would also like to thank the anonymous reviewers who
  have helped to clarify a number of points and further improve the
  paper.}

\newpage

\appendix

\section{Sample-Size Independent Parametrization}
\label{app:cond}

In Section~\ref{sec:cvsubsets} we needed as a precondition that the
test performances converge for a fixed parameter configuration $c$ as
$n$ tends to infinity. In this section, we discuss this condition in
the context of the empirical risk minimization. We refer to the book
by \citet{devroy96} and the references contained therein for the
theoretical results.

In the empirical risk minimization framework, a learning algorithm is
interpreted as choosing the solution $g_n$ with the best error on the
training set $\hat R_n(g) = \frac 1n \sum_{i=1}^n \ell(g(X_i), Y_i)$
from some hypothesis class $\hypocl$. If the VC-dimension of
$\hypocl$, which roughly measures the complexity of $\hypocl$, is
finite then it holds that
\begin{equation*}
  \hat R_n(g) \to R(g)
\end{equation*}
uniformly over $g\in\hypocl$, and consequently also
$R(g_n)\to \inf_{g\in\hypocl} R(g)$.

Now in order to make the link to our condition, we need that each
parameter $c$ corresponds to a fixed hypothesis class $\hypocl_c$ (and
not depend on the sample size in some way), and that the VC-dimension
is finite. For feed-forward neural networks, one can show, for
example, that neural networks with one hidden layer with $k$ inner
nodes and sigmoid activation function have finite VC-dimension
\cite[Theorem 30.6]{devroy96}.

For kernel machines, we consider the reproducing kernel Hilbert space
(RKHS, \citealt{aronszajn50}) view: Let $\rkhs_k$ the RKHS induced by a
Mercer kernel $k$ with norm $\|\cdot\|_{\rkhs_k}$. \citet{evgeniou99}
show that the $V_\gamma$-dimension of the hypothesis class $\hypocl(A)
= \{f\in \rkhs_k \mid \|f\|^2_{\rkhs_k} \le A\}$ is finite, from which
uniform converges of the kind described above follows, and thus also
that our condition holds.

Many kernel methods, including kernel ridge regression and support
vector machines can be written as regularized optimization problems in
the RKHS of the form:
\begin{equation*}
  \min_{f\in \rkhs_k} \left(\frac1n\sum_{i=1}^n \ell(f(X_i), Y_i) + C
  \|f\|^2_{\rkhs_k}\right)
  = \min_{f\in \rkhs_k}\left(\hat R_n(f) + C\|f\|^2_{\rkhs_k}\right).
\end{equation*}
Now if we assume that $\ell(f(x), y)$ is bounded by $B$ and continuous
in $f$, it follows that the minimum is attained for some $f$ with
$\|f\|^2_{\rkhs_k} \le B/C$: For $\|f\|_{\rkhs_k} = 0$, $\hat R_n(f) +
C\|f\|^2_{\rkhs_k} \le B$, and for $\|f\|_{\rkhs_k} > B/C$, $\hat R_n(f) +
C\|f\|^2_{\rkhs_k} > B$. Because $\hat R(f) + C\|f\|^2_{\rkhs_k}$ is
continuous in $f$, it follows that the minimum is somewhere in-between.

Now $\hat R_n(f)$ converges to $R(f)$ uniformly over $f\in\hypocl(B/C)$,
such that there exists an $A \le B/C$ such that
\begin{equation*}
  \min_{f\in \rkhs_k}\left(\hat R(f) + C\|f\|^2_{\rkhs_k}\right)
  = \min_{f\in \hypocl(A)} R(f),
\end{equation*}
and we see that a regularization constant $C$ corresponds to a fixed
hypothesis class $\hypocl(A)$ and our condition holds again.

As a direct consequence of this discussion we have to take care of the
correct scaling of the regularization constants during the CVST
run. Thus, for kernel ridge regression we have to scale the $\lambda$
parameter linearly with the data set size and for the SVR divide the
$C$ parameter accordingly (see the reference implementation in the
official CRAN package named \texttt{CVST} or the development version
at \url{https://github.com/tammok/CVST} for details).

\section{Example Run of CVST Algorithm}
\label{sec:exampleRun}
In this section we give an example of the whole CVST algorithm on one
\emph{noisy sinc} data set of $n=1,000$ data points with intrinsic
dimensionality of $d = 2$. The CVST algorithm is executed with $\steps
= 10$ and $\stoppingWindow=4$. We use a $\nu$-SVM
\citep{Scholkopf:2000} and test a parameter grid of $\log_{10}(\sigma)
\in \{-3, -2.9, \dots , 3\}$ and $\nu \in \{0.05, 0.1, \dots,
0.5\}$. The procedure runs for 4 steps after which the early stopping
rule takes effect. This yields the following traces matrix (only
remaining configurations are shown):

\begin{center}
\begin{tabular}{rrrrr}
  \toprule
 & $\mathbf{\modelsize=90}$ & $\mathbf{\modelsize=180}$ & $\mathbf{\modelsize=270}$ & $\mathbf{\modelsize=360}$ \\ 
  \midrule
$\log_{10}(\sigma)=-2.3, \nu=0.35$ & 0 & 0 & 1 & 0 \\ 
  $\log_{10}(\sigma)=-2.3, \nu=0.40$ & 0 & 1 & 1 & 0 \\ 
  $\log_{10}(\sigma)=-2.3, \nu=0.45$ & 0 & 1 & 0 & 1 \\ 
  $\log_{10}(\sigma)=-2.2, \nu=0.30$ & 0 & 1 & 0 & 0 \\ 
  $\log_{10}(\sigma)=-2.2, \nu=0.35$ & 0 & 1 & 1 & 0 \\ 
  $\log_{10}(\sigma)=-2.2, \nu=0.40$ & 0 & 1 & 1 & 1 \\ 
  $\log_{10}(\sigma)=-2.2, \nu=0.45$ & 0 & 1 & 1 & 1 \\ 
  $\log_{10}(\sigma)=-2.2, \nu=0.50$ & 0 & 0 & 1 & 1 \\ 
  $\log_{10}(\sigma)=-2.1, \nu=0.35$ & 0 & 1 & 1 & 1 \\ 
\hline
  $\log_{10}(\sigma)=-2.1, \nu=0.40$ & 0 & 1 & 1 & 1 \\ 
\hline
  $\log_{10}(\sigma)=-2.1, \nu=0.45$ & 0 & 1 & 1 & 1 \\ 
  $\log_{10}(\sigma)=-2.1, \nu=0.50$ & 1 & 0 & 1 & 1 \\ 
  $\log_{10}(\sigma)=-2.0, \nu=0.50$ & 0 & 0 & 1 & 1 \\ 
   \bottomrule
\end{tabular}
\end{center}

The corresponding mean square errors of the remaining configurations
after each step are shown in the next matrix. Based on these values,
the winning configuration, namely $\log_{10}(\sigma)=-2.1, \nu=0.40$ is chosen:

\begin{center}
\begin{tabular}{rrrrr}
  \toprule
 & $\mathbf{\modelsize=90}$ & $\mathbf{\modelsize=180}$ & $\mathbf{\modelsize=270}$ & $\mathbf{\modelsize=360}$ \\ 
  \midrule
$\log_{10}(\sigma)=-2.3, \nu=0.35$ & 0.0370 & 0.0199 & 0.0145 & 0.0150 \\ 
  $\log_{10}(\sigma)=-2.3, \nu=0.40$ & 0.0362 & 0.0197 & 0.0146 & 0.0146 \\ 
  $\log_{10}(\sigma)=-2.3, \nu=0.45$ & 0.0356 & 0.0197 & 0.0146 & 0.0144 \\ 
  $\log_{10}(\sigma)=-2.2, \nu=0.30$ & 0.0365 & 0.0195 & 0.0146 & 0.0148 \\ 
  $\log_{10}(\sigma)=-2.2, \nu=0.35$ & 0.0351 & 0.0193 & 0.0142 & 0.0145 \\ 
  $\log_{10}(\sigma)=-2.2, \nu=0.40$ & 0.0345 & 0.0194 & 0.0143 & 0.0141 \\ 
  $\log_{10}(\sigma)=-2.2, \nu=0.45$ & 0.0340 & 0.0193 & 0.0143 & 0.0140 \\ 
  $\log_{10}(\sigma)=-2.2, \nu=0.50$ & 0.0332 & 0.0200 & 0.0145 & 0.0138 \\ 
  $\log_{10}(\sigma)=-2.1, \nu=0.35$ & 0.0353 & 0.0194 & 0.0144 & 0.0142 \\ 
\hline
  $\log_{10}(\sigma)=-2.1, \nu=0.40$ & 0.0343 & 0.0195 & 0.0142 & 0.0138 \\ 
\hline
  $\log_{10}(\sigma)=-2.1, \nu=0.45$ & 0.0340 & 0.0197 & 0.0140 & 0.0138 \\ 
  $\log_{10}(\sigma)=-2.1, \nu=0.50$ & 0.0329 & 0.0199 & 0.0142 & 0.0137 \\ 
  $\log_{10}(\sigma)=-2.0, \nu=0.50$ & 0.0351 & 0.0204 & 0.0145 & 0.0137 \\ 
   \bottomrule
\end{tabular}
\end{center}

\section{Nonparametric Tests}
\label{sec:tests}
The tests used in the CVST algorithm are common tools in the field of
statistical data analysis. Here we give a short summary based on 
\citet{filliben} and cast the notation into the CVST framework
context. Both methods deal with the performance matrix of $K$
configurations with performance values on $r$ data points:

\begin{center}
\begin{tabular}{ccccc}
  \toprule
  & \multicolumn{4}{c}{\bf Data Points} \\
  \cmidrule{2-5}
  \bf Configuration & 1 & 2 & $\hdots$ & $r$ \\
  \midrule
  1 & $x_{11}$ & $x_{12}$ & $\hdots$ & $x_{1r}$ \\
  2 & $x_{21}$ & $x_{22}$ & $\hdots$ & $x_{2r}$ \\
  3 & $x_{31}$ & $x_{32}$ & $\hdots$ & $x_{3r}$ \\
  $\vdots$ &   $\vdots$ &  $\vdots$ &  &  $\vdots$ \\
  $K$ & $x_{K1}$ & $x_{K2}$ & $\hdots$ & $x_{Kr}$ \\
  \bottomrule
\end{tabular}
\end{center}

Both tests treat similar questions (``Do the $K$ configurations have
identical effects?'') but are designed for different kinds of data:
Cochran's Q test is tuned for binary $x_{ij}$ while the Friedman test
acts on continuous values. In the context of the CVST algorithm the
tests are used for two different tasks:
\begin{enumerate}
\item Determine whether a set of configurations are the top performing
  ones (step \ding{202} in the overview Figure~\ref{fig:overview} and the function
  \textsc{topConfigurations} defined in
  Algorithm~\ref{alg:topConfigurations}). \label{exp1}
\item Check whether the remaining configurations behaved similar in
  the past (step \ding{204} in the overview Figure~\ref{fig:overview}
  and the function \textsc{similarPerformance} in
  Algorithm~\ref{alg:similarPerformance}). \label{exp2}
\end{enumerate}

In both cases, the configurations are compared either by the
performance on the samples (Point~\ref{exp1} above) or on the last
\stoppingWindow traces (Point~\ref{exp2} above) of the remaining
configurations.  Depending on the learning problem either the Friedman
test for regression task or the Cochran's Q test for classification
tasks is used in Point~\ref{exp1}.

In both cases the hypotheses for the tests are as follows:
\begin{itemize}
\item $H_0$: All configurations are equally effective (no effect)
\item $H_1$: There is a difference in the effectiveness among the
  configurations, i.e., there is at least one configuration showing a
  significantly different effect on the data points.
\end{itemize}

\subsection{Cochran's Q Test}

The test statistic $T$ is calculated as follows:
\[
T = K(K-1)\frac{\sum_{i=1}^K (R_i - \frac{M}{K})^2}{\sum_{i=1}^rC_i(K-C_i)}
\]
with $R_i$ denoting the row total for the $i^{th}$ configuration,
$C_i$ the column total for the $i^{th}$ data point, and $M$ the grand
total. We reject $H_0$, if $T > \chi^2(1-\alpha, K-1)$ with
$\chi^2(1-\alpha, K-1)$ denoting the $(1 - \alpha)$-quantile of the
$\chi^2$ distribution with $K - 1$ degrees of freedom and $\alpha$ is
the significance level. As \cite{cochran50} points out, the $\chi^2$
approximation breaks down for small tables. \cite{tate1970} state that
as long as the table contains at least 24 entries, the $\chi^2$
approximation will suffice, otherwise the exact distribution should be
used which can either be calculated explicitly
\citep[see][]{patil1975} or determined via permutation.

\subsection{Friedman Test}

Let $R(x_{ij})$ be the rank assigned to $x_{ij}$ within data point $i$
(i.e., rank of a configuration on data point $i$). Average ranks are
used in the case of ties. The ranks for a configuration at position $k$
are summed up over the data points to obtain
\[
R_k = \sum_{i=1}^r R(x_{ki}).
\]
The test statistic $T$ is then calculated as follows:
\[
T = \frac{12}{rK(K+1)}\sum_{i=1}^K(R_i-r(K+1)/2)^2.
\]
If there are ties, then
\[
T = \frac{(K-1) \sum_{i=1}^K (R_i-r(K+1)/2)^2}{[\sum_{i=1}^K
  \sum_{j=1}^r R(x_{ij})^2] - [rK(K+1)^2] / 4}.
\]
We reject $H_0$ if $T > \chi^2(\alpha, K-1)$ with $\chi^2(\alpha,
K-1)$ denoting the $\alpha$-quantile of the $\chi^2$ distribution
with $K - 1$ degrees of freedom and $\alpha$ being the significance
level.

\section{Proof of Safety Zone Bound}
\label{sec:SecProof}
In this section we prove the safety zone bound of
Section~\ref{sec:fnr} of the paper. We will follow the notation and
treatment of the sequential analysis as found in the original
publication of \cite{wald47}, Sections~5.3 to 5.5. First of all,
Wald proves in Equation~5:27 that the following approximation holds:
\[
\operatorname{ASN}(\pi_0, \pi_1| \pi = 1.0) \approx \frac{\Bounda}{\Boundp}.
\]
where $\operatorname{ASN}(\cdot, \cdot)$ (Average Sample Number) is
the expected number of steps until the given test will yield a
decision, if the underlying success probability of the tested sequence
is $\pi = 1.0$. The minimal $\operatorname{ASN}(\pi_0, \pi_1| \pi =
1.0)$ is therefore attained if $\Boundp$ is maximal, which is clearly
the case for $\pi_1=1.0$ and $\pi_0 = 0.5$, which holds by
construction. So we get the lower bound of $\steps$ for a given
significance level $\alpha_l, \beta_l$:
\[
\steps \geq \Big\lceil \Bounda / \log 2 \Big\rceil.
\]
The lower line $L_0$ of the graphical sequential analysis test as
exemplified in Figure~\ref{fig:overview} of the paper is defined as follows (see Equation~5:13 -
5:15):
\[
L_0 = \frac{\Boundb}{\Boundp - \Boundq} + n \frac{\BoundQ}{\Boundp - \Boundq}.
\]
Setting $L_0 = 0$, we can get the intersection of the lower test line
with the x-axis and therefore the earliest step $\ssafe$, in
which the procedure will drop a constant loser configuration. This
yields
\[
\ssafe = -\frac{\Boundb}{\Boundp - \Boundq} /
\frac{\BoundQ}{\Boundp - \Boundq} = -\frac{\Boundb}{\BoundQ} =
\frac{\Boundb}{\Boundq} = \secZone.
\]
The last equality can be derived by inserting the closed form of
$\pi_1$ given $\pi_0 = 0.5$:
\begin{align*}
  \steps = \operatorname{ASN}(\pi_0, \pi_1| \pi = 1.0) =
  \frac{\Bounda}{\Boundp} = \frac{\Bounda}{\log 2\pi_1}
  \Leftrightarrow 2\pi_1 = \BoundPi \Leftrightarrow   \pi_1  = \frac{1}{2} \BoundPi.
\end{align*}

Setting $\ssafe$ in relation to the maximal number of steps
\steps~yields the safety zone bound of Section~\ref{sec:fnr}.

% \section{The Bound}
% \label{sec:BoundProof}
% Recurrence of third line:
% \begin{align*}
%   a_0 = 2 \quad \quad a_n & = \underbrace{n + 2}_\text{left neighbor} + \underbrace{a_{n-1}}_\text{lower left neighbor} \\
%   & =  n + 2 + \sum_{i=1}^{n-1}(i+2) + a_0\\
%   & =  n + 2 + 2(n-1) + \frac{(n-1)n}{2} + a_0\\
%   & = \frac{1}{2} n^2 + \frac{5}{2} n + 2
% \end{align*}

% \JMLRfigure{bound}{1}{Approximation and real error probability.}

% THE bound (see Figure~\ref{fig:bound}):
% \begin{itemize}
% \item First row $1$ time (always)
% \item Second row $r - 1$ times (if $L_0(\lceil n_\text{drop} \rceil + 2 ) < 1$)
% \item Third row $a(r-2)$ times (if $L_0(\lceil n_\text{drop} \rceil + 3) < 2$)
% \end{itemize}
% \begin{align*}
%   P(\text{reject } \pi) \leq & 1 - 1 \pi^r - (r-1)\pi^{r-1}(\pi - 1) - [\frac{1}{2}(r-2)^2 + \frac{5}{2}(r - 2) + 2]\pi^{r-2}(1-\pi)^2\\ &- \sum_{i=3}^{\lceil L_0(\steps) \rceil} \pi^{r-i}(1-\pi)^i\quad \quad \text{with }
%   r = \steps - \underbrace{\secZone}_\text{safety zone}
% \end{align*}

\section{Proof of Computational Budget}
\label{sec:BudgetProof}
% For the size $N$ of the whole data set and a cubic learner, resulting
% in a learning time of $\tfull = N^3$, one observes that learning on a
% proportion of size $\frac{i}{\steps}N$ takes about
% $\frac{i^3}{\steps^3}\tfull$ time.  Via construction one has to learn on
% all $k$ parameter configurations in each step before hitting
% $\srate\times\steps$ and on $K\times(1 - r)$ parameter configurations with
% drop ratio $r$ afterwards.
%%%%%%%%%%%%%%%%%%% changeblock start %%%%%%%%%%%%%%%%%%%%%%%%%%%%%
For the size $N$ of the whole data set and a learner of time complexity
$f(n) = n^m$, where $m \in \mathbb{N}$, resulting
in a learning time of $\tfull = N^m$, one observes that learning on a
proportion of size $\frac{i}{\steps}N$ takes about
$\frac{i^m}{\steps^m}\tfull$ time. By construction one has to learn on
all $K$ parameter configurations in each step before hitting
$\srate\times\steps$ and on $K\times r$ parameter configurations with
drop rate $(1-r)$ afterwards.
%%%%%%%%%%%%%%%%%%% changeblock end %%%%%%%%%%%%%%%%%%%%%%%%%%%%%
Thus the entirely needed computation time is given by
% \[
%   K\times(1-r)\sum\limits_{i = 1}^{\srate\times\steps} \frac{i^3}{\steps^3}\tfull + K\times r\sum\limits_{i = 1}^{\steps} \frac{i^3}{\steps^3}\tfull
% \]
%%%%%%%%%%%%%%%%%%% changeblock start %%%%%%%%%%%%%%%%%%%%%%%%%%%%%
\[
  K\times(1-r)\sum\limits_{i = 1}^{\srate\times\steps} \frac{i^m}{\steps^m}\tfull + K\times r\sum\limits_{i = 1}^{\steps} \frac{i^m}{\steps^m}\tfull
\]
%%%%%%%%%%%%%%%%%%% changeblock end %%%%%%%%%%%%%%%%%%%%%%%%%%%%%
which should be smaller than the given time budget $T$.

% Making use of the equality $\sum\limits_{i = 1}^{j} i^3 =
% \frac{j^2(j+1)^2}{4}$ one can reformulate the inequality:
% \begin{gather*}
%   T \geq \frac{\tfull\times K(1-r)}{\steps^3}\frac{\srate\times\steps^2(\srate\times\steps+1)^2}{4} 
%     + \frac{\tfull\times K\times r}{\steps^3}\frac{\steps^2(\steps+1)^2}{4}\\
%     = \frac{\tfull\times K}{\steps}\Big[(1-r)\frac{\srate^2(\srate\times\steps+1)^2}{4} + r\frac{(\steps+1)^2}{4}\Big]  
% \end{gather*}
%%%%%%%%%%%%%%%%%%% changeblock start %%%%%%%%%%%%%%%%%%%%%%%%%%%%%
Making use of the fact proved in Appendix~\ref{sec:BudgetProof2} that $\frac{1}{n^{m-1}}\sum\limits_{i = 1}^{n} i^m \overset{\cdot}{\leq}
\frac{n^{2}}{m+1} + \frac{n}{2} + \frac{m}{12}$ holds under the mild condition of $n > \frac{m}{2\pi}$
, where $\overset{\cdot}{\leq}$ describes an asymptotic relation, one can reformulate the inequality as follows:
\begin{gather*}
  \frac{\tfull K (1-r)\srate^{m-1}}{\steps}\frac{1}{(\srate\steps)^{m-1}}\sum\limits_{i = 1}^{\srate\steps} i^m
    + \frac{\tfull K r}{\steps}\frac{1}{\steps^{m-1}}\sum\limits_{i = 1}^{\steps} i^m\\
    \overset{\cdot}{\leq} \frac{\tfull K}{\steps}\Big[(1-r)\srate^{m-1}\left(\frac{(\srate\steps)^{2}}{m+1} + \frac{\srate\steps}{2} + \frac{m}{12}\right) + r\left(\frac{\steps^{2}}{m+1} + \frac{\steps}{2} + \frac{m}{12}\right)\Big]
    \overset{\cdot}{\leq} T.
\end{gather*}
%%%%%%%%%%%%%%%%%%% changeblock end %%%%%%%%%%%%%%%%%%%%%%%%%%%%%
It is obvious that this inequality is quadratic in the variable
$\steps$ which can be solved by bringing the above inequality in
standard form:
% \begin{gather*}
%   0 \geq
%   \Big[(1-r)\frac{\srate^2(\srate\times\steps+1)^2}{4}+r\frac{(\steps+1)^2}{4}\Big] -
%   \frac{T\times\steps}{\tfull\times k}\\
%  \Leftrightarrow \; 0 \geq
%   \frac{(1-r)\srate^4+r}{4}\steps+1\srate^2 + \Big[\frac{(1-r)\srate^3+r}{2} -
%   \frac{T}{\tfull\times k}\Big]\steps + \frac{(1-r)\srate^2+r}{4}\\ 
% \Leftrightarrow
%   \; 0 \geq \steps\srate^2 + 2\frac{\tfull\times k(1-r)\srate^3+\tfull\times k\times r
%     - 2T}{((1-r)\srate^4+r)\tfull\times k}\steps +
%   \frac{(1-r)\srate^2+r}{(1-r)\srate^4+r}.
% \end{gather*}
%%%%%%%%%%%%%%%%%%% changeblock start %%%%%%%%%%%%%%%%%%%%%%%%%%%%%
\begin{gather*}
  0 \overset{\cdot}{\geq}
  \Big[(1-r)\srate^{m-1}\left(\frac{(\srate\steps)^{2}}{m+1} + \frac{\srate\steps}{2} + \frac{m}{12}\right) + r\left(\frac{\steps^{2}}{m+1} + \frac{\steps}{2} + \frac{m}{12}\right)\Big] -
  \frac{T \steps}{\tfull K}\\
 \Leftrightarrow \; 0 \overset{\cdot}{\geq}
  \frac{(1-r)\srate^{m+1} + r}{m+1}\steps^2 + \Big[\frac{(1-r)\srate^m+r}{2} -
  \frac{T}{\tfull K}\Big]\steps + \left((1-r)\srate^{m-1}+r\right)\frac{m}{12}\\ 
\Leftrightarrow
  \; 0 \overset{\cdot}{\geq} \steps^2 + 2\left[\frac{m+1}{4}\frac{\tfull K(1-r)\srate^m + \tfull K r - 2T}{((1-r)\srate^{m+1}+r)\tfull K}\right]\steps +
  \frac{m(m+1)}{12}\frac{(1-r)\srate^{m-1}+r}{(1-r)\srate^{m+1}+r}.
\end{gather*}
%%%%%%%%%%%%%%%%%%% changeblock end %%%%%%%%%%%%%%%%%%%%%%%%%%%%%
% Substituting $a = \frac{\tfull\times k(1-r)\srate^3+\tfull\times k\times r -
%   2T}{((1-r)\srate^4+r)\tfull\times k}$ and $b =
% \frac{(1-r)\srate^2+r}{(1-r)\srate^4+r}$ above is equivalent to:
% \begin{equation*}
%   \steps = -a + y,\, y \in \left\{-\sqrt{a^2 - b},+\sqrt{a^2 - b}\right\}.
% \end{equation*}
%%%%%%%%%%%%%%%%%%% changeblock start %%%%%%%%%%%%%%%%%%%%%%%%%%%%%
Substituting $a = \frac{m+1}{4}\frac{\tfull K(1-r)\srate^m + \tfull K r - 2T}{((1-r)\srate^{m+1}+r)\tfull K}$ and $b =
\frac{m(m+1)}{12}\frac{(1-r)\srate^{m-1}+r}{(1-r)\srate^{m+1}+r}$ above is equivalent to:
\begin{equation*}
  \steps = -a + y,\, y \in \left\{-\sqrt{a^2 - b},+\sqrt{a^2 - b}\right\}.
\end{equation*}
%%%%%%%%%%%%%%%%%%% changeblock end %%%%%%%%%%%%%%%%%%%%%%%%%%%%%
For the sake of a meaningful step amount, i.e., $\steps > 0$ and
furthermore $\steps$ as large as possible we choose it as
\begin{equation*}
  \steps = \left\lfloor -a +\sqrt{a^2 - b}\right\rfloor.
\end{equation*}
% Note that $\steps$ is a function of the parameter $s$.
%%%%%%%%%%%%%%%%%%% changeblock start %%%%%%%%%%%%%%%%%%%%%%%%%%%%%
Note that $\steps$ is a function of the parameter $\srate$.
%%%%%%%%%%%%%%%%%%% changeblock end %%%%%%%%%%%%%%%%%%%%%%%%%%%%%
The mild condition for the upper bound of power sums mentioned above has to be fulfilled. Since obviously
$b \geq 0$ holds, $a$ must be negative in order to gain a positive step
amount. Furthermore the root has to be solvable.  So the following
constraints on $\srate$ have to be made:

% \begin{align*}
%   (1)&\qquad 2T \geq \tfull\times k(1-r)\srate^3+\tfull\times k\times r\\
%   (2)&\qquad a^2 \geq b.
% \end{align*}

\begin{align*}
  (1)&\qquad 2T \geq \tfull K(1-r)\srate^m+\tfull K r\\
  (2)&\qquad a^2 \geq b\\
  (3)&\qquad \srate\steps > \frac{m}{2\pi}.
\end{align*}
Note that condition (3) is trivial for a small degree of complexity $m$, which is the common case.
\subsection{Proof of the Upper Bound}
\label{sec:BudgetProof2}
Assume that $n  = \frac{m}{c}$ where $c < 2\pi$. Denote by $B_i$ the \emph{Bernoulli} numbers.
\begin{align*}
 \frac{1}{n^{m-1}}\sum\limits_{i = 1}^{n}i^m = \frac{1}{n^{m-1}}\big[\frac{n^m}{2} + \frac{1}{m+1}\sum\limits_{k=0}^{\lfloor\frac{m}{2}\rfloor}{m+1 \choose 2k}B_{2k}n^{m+1-2k}\big]\\
 = \frac{n^{2}}{m+1} + \frac{n}{2} + \frac{m}{12} + \sum\limits_{k=2}^{\lfloor\frac{m}{2}\rfloor}(-1)^{i+1}\frac{1}{m+1}{m+1 \choose 2k}|B_{2k}|n^{2-2k}
\end{align*}
The sum term is alternating in sign and asymptotically monotone decreasing in $k$:
\begin{align*}
 \frac{1}{m+1}{m+1 \choose 2k}|B_{2k}|n^{2-2k} \sim \frac{m!}{(2k)!(m+1-2k)!}2\frac{(2k)!}{(2\pi)^{2k}}\left(\frac{m}{c}\right)^{2-2k}\\
 = \frac{2m}{c^2}\underbrace{\prod\limits_{j=0}^{2k-2}\left(1 - \frac{j}{m}\right)}_{\downarrow\;0\;as\;k\;\rightarrow\;\infty}\underbrace{\left(\frac{c}{2\pi}\right)^{2k}}_{\downarrow\;0\;,\;\frac{c}{2\pi}<\;1}
\end{align*}
where we use the asymptotic behavior of $|B_{2k}| \sim 2\frac{(2k)!}{(2\pi)^{2k}}$.
Now that the sequence under the sum starts negative, grouping up each two subsequent elements gives a negative value, such that the sum also is negative.

Therefore
\begin{align*}
\frac{1}{n^{m-1}}\sum\limits_{i = 1}^{n}i^m \overset{\cdot}{\leq} \frac{n^{2}}{m+1} + \frac{n}{2} + \frac{m}{12}.
\end{align*}

\bibliography{fastCVjmlr}

\end{document}